\theoremstyle{definition}
\newtheorem{defn}{Definition}
\theoremstyle{plain}
\newtheorem{thm}{Theorem}
\newtheorem{lem}[thm]{Lemma}
\newcommand{\argmin}{\mathop{\rm arg~min}\limits}
\title{Multiple Wasserstein Gradient Descent Algorithm for Multi-Objective Distributional Optimization}
\author[1]{\href{mailto:<haidnguyen0909@gmail.com>?Subject=Your UAI 2025 paper}{Dai Hai Nguyen}{}}
\author[2]{\href{mailto:<mami@kuicr.kyoto-u.ac.jp>?Subject=Your UAI 2025 paper}{Hiroshi Mamitsuka}{}}
\author[1]{\href{mailto:<atsu@ist.hokudai.ac.jp>?Subject=Your UAI 2025 paper}{Atsuyoshi Nakamura}{}}
\affil[1]{%
    Hokkaido University, Hokkaido, Japan
}
\affil[2]{%
    Kyoto University, Kyoto, Japan
}
\begin{document}
\maketitle
\begin{abstract}
We address the optimization problem of simultaneously minimizing multiple objective functionals over a family of probability distributions. This type of Multi-Objective Distributional Optimization commonly arises in machine learning and statistics, with applications in areas such as multiple target sampling, multi-task learning, and multi-objective generative modeling. To solve this problem, we propose an iterative particle-based algorithm, which we call Multiple Wasserstein Gradient Descent (MWGraD), which  constructs a flow of intermediate empirical distributions, each being represented by a set of particles, which gradually minimize the multiple objective functionals simultaneously.
Specifically, MWGraD consists of two key steps at each iteration. First, it estimates the Wasserstein gradient for each objective functional based on the current particles. Then, it aggregates these gradients into a single Wasserstein gradient using dynamically adjusted weights and updates the particles accordingly. In addition, we provide theoretical analysis and present experimental results on both synthetic and real-world datasets, demonstrating the effectiveness of MWGraD.
\end{abstract}

\section{Introduction}\label{sec:intro}
Many problems in machine learning and computational statistics turn into distributional optimization, where the goal is to minimize a functional $F:\mathcal{P}_{2}(\mathcal{X})\rightarrow \mathbb{R}$ over the set of probability distributions: $\min_{q\in \mathcal{P}_{2}(\mathcal{X})}F(q)$, where $\mathcal{P}_{2}(\mathcal{X})$ denotes the set of probability distributions defined on the domain $\mathcal{X}$ ($\subseteq \mathbb{R}^{d}$) with finite second-order moment. This formulation arises in a variety of well-known problems, including Bayesian inference (e.g., variational autoencoder \citep{kingma2014stochastic}) and synthetic sample generation (e.g. generative adversarial networks \citep{goodfellow2020generative}). These models aim to approximate a target distribution $\pi$ by generating samples (or also called particles) in a way that minimizes the dissimilarity $D(q,\pi)$ between the empirical probability distribution $q$ derived from the particles and the target distribution $\pi$. Common dissimilarity measures include Kullback-Leiber (KL) divergence, Jensen-Shanon (JS) divergence, and Wasserstein distance in the optimal transport \citep{villani2021topics}.  By setting $F(q)=D(q, \pi)$, the task can be framed as a distributional optimization problem. This problem can be solved using iterative algorithms, such as Wasserstein gradient descent \citep{zhang2016first}, which proceeds in two main steps at each iteration: (1) estimating the Wasserstein gradient of $F$ with respect to the current distribution, and (2) applying the exponential mapping to update the distribution on $\mathcal{P}_{2}(\mathcal{X})$. However, step (1) can be non-trivial when the target distribution $\pi$ is represented by a set of samples, making it challenging to estimate the Wasserstein gradient of $F$. To tackle this problem, Variational Transport (VT) \citep{liu2021infinite} is proposed. The key idea of VT is to assume that F has a variational form and reformulates $F$ as a variational maximization problem. Solving this problem allows to approximate the Wasserstein gradient of $F$ through samples from $\pi$, with resulting solution specifying a direction to update each particle. This can be viewed as a forward discretization of the Wasserstein gradient flow \citep{santambrogio2015optimal}. Inspired by VT, several other methods have been introduced to address variants of distributional optimization problems, including MirrorVT \citep{nguyen2023mirror} and MYVT \citep{nguyen2024moreau}.

On the other side, multi-objective optimization (MOO) \citep{deb2016multi} optimizes multiple objective functions simultaneously, and can be formulated as
\begin{equation}  
\label{definition:MOO}
\min_{\textbf{x}\in\mathcal{X}} \textbf{f}(\textbf{x})= \min_{\textbf{x}\in\mathcal{X}} f_{1}(\textbf{x}), f_{2}(\textbf{x}),..., f_{K}(\textbf{x}),
\end{equation}
where $K\geq 2$ represents the number of objectives, $f_{k}(\textbf{x})$ is the $k$-th objective function, and $\mathcal{X}\subseteq \mathbb{R}^{d}$ is the feasible set of $d$-dimensional vectors. 
Different from single-objective optimization, in MOO, it could have two vectors where one performs better for task $i$ and the other performs better for task $j\neq i$. Therefore, Pareto optimality is defined to deal with such an incomparable case. In particular, for two solutions $\textbf{x},\textbf{y}\in \mathcal{X}$, we say that $\textbf{y}$ dominates $\textbf{x}$ if $f_k(\textbf{x})\leq f_k(\textbf{y})$ for all $k\in [K]$, where $[K]$ denotes $\left\{ 1,2,3...,K\right\}$, and $\textbf{f}(\textbf{x})\neq \textbf{f}(\textbf{y})$. A solution $\textbf{x}$ is Pareto optimality if no other solution in $\mathcal{X}$ dominates it. MOO has found applications in various domains, including online advertising \citep{ma2018modeling} and reinforcement learning \citep{thomas2021multi}. However, one of the main challenges in MOO is \textbf{gradient conflict}: objectives with large gradients can dominate the update direction, potentially leading to poor performance for other objectives with smaller gradients. To mitigate this issue, several MOO-based methods have been proposed to balance the contribution of all objectives. A typical method is Multiple Gradient Descent Algorithm (MGDA) \citep{desideri2012multiple}, which seeks a conflict-avoidant update direction that maximizes the minimal improvement across all objectives. MGDA converges to a Pareto stationary point, where no common descent direction exists for all objective functions. Building on MGDA, several other methods have been introduced to further enhance conflict avoidance and improve solution balancing, including CAGrad \citep{liu2021conflict}, PCGrad \citep{yu2020gradient}, GradDrop \citep{chen2020just}.

The work most closely related to ours is MT-SGD \citep{phan2022stochastic}, which is designed for multi-target sampling. In this setup, we are given a set of multiple unnormalized target distributions and aim to generate particles that simultaneously approximate these target distributions. MT-SGD is shown to reduce to multi-objective optimization, where each objective function corresponds to a KL divergence for one of the target distributions. Additionally, MT-SGD has been successfully applied to multi-task learning, achieving state-of-the-art performance across various baselines, thanks to its ability to sample particles from the joint likelihood of multiple target distributions. Inspired by MT-SGD, we introduce the formulation of Multi-Objective Distribution Optimization (MODO). Given a set of objective functionals  $F_{1}(q), F_{2}(q),..., F_{K}(q):\mathcal{P}_{2}(\mathcal{X})\rightarrow \mathbb{R}$, where each $F_{k}(q)$ is defined over the space of probability distribution $q\in \mathcal{P}_{2}(\mathcal{X})$, our goal is to find the optimal distribution that minimizes the following vector-valued objective functional
\begin{equation}  
\label{definition:MODO}
\min_{q\in\mathcal{P}_{2}(\mathcal{X})} \textbf{F}(q) = \min_{q\in\mathcal{P}_{2}(\mathcal{X})} F_{1}(q), F_{2}(q),..., F_{K}(q).
\end{equation}
 Note that, while each $f_{k}$ in (\ref{definition:MOO}) is defined over the space of vectors $\textbf{x}\in\mathcal{X}$, each $F_{k}$ in (\ref{definition:MODO}) is defined over the space of probability distributions $q\in \mathcal{P}_{2}(\mathcal{X})$. 
 Similar to MOO, for two distribution $p,q\in \mathcal{P}_{2}(\mathcal{X})$, we say that $p$ dominates $q$ if $F_k(q)\leq F_k(p)$ for all $k\in [K]$, and $\textbf{F}(p)\neq \textbf{F}(q)$. A distribution $q$ is Pareto optimality if no other distribution in $\mathcal{P}_{2}(\mathcal{X})$ dominates it. 
 
 To solve the MODO problem, we introduce an iterative algorithm, which we call Multi-objective Wasserstein Gradient Descent (MWGraD) by constructing a flow of probability distributions, gradually minimizing all the objective functionals. Specifically, MWGraD consists of two key steps at each iteration. First, for each objective functional $F_{k}$ (for $k\in[K]$), MWGraD estimates the Wasserstein gradient based on the current probability distribution. Second, MWGraD aggregates these gradients into a single Wasserstein gradient using dynamically updated weights and updates the current probability distribution accordingly. In practice, MWGraD operates on a flow of empirical distributions, where each distribution is represented by a set of particles that are updated iteratively. We emphasize that MWGraD can be viewed as a generalized version of MT-SGD \citep{phan2022stochastic}. That is, while MT-SGD specifically tackles the multi-target sampling problem as a form of MODO with the KL divergence as the objective functional, MWGraD can handle a broader class of functionals. Furthermore, we provide theoretical analysis on the convergence of MWGraD to the Pareto stationary point, and experimental results on both synthetic and real-world datasets, demonstrating the effectiveness of the proposed algorithm.

\section{Related Works}
\textbf{Distributional Optimization}. Two widely used Bayesian sampling methods are Gradient Markov chain Monte Carlo (MCMC)  \citep{welling2011bayesian}  and Stein variational gradient descent (SVGD) \citep{liu2016stein}. Gradient MCMC generates samples from a Markov chain to approximate a target distribution (e.g., a posterior), but the resulting samples can be highly correlated. In contrast, SVGD initializes a set of particles and updates them iteratively to approximate the target distribution, often achieving good approximations with relatively fewer samples. As noted by \citep{chen2018unified}, SVGD can be viewed as simulating the steepest descending curves, or gradient flows, of the KL-divergence on a certain kernel-related distribution space. Specifically, the functional $F(q)$ is defined as $KL(q, \pi)$, where $\pi(\textbf{x}) \propto \exp\{-g(\textbf{x})\}$ and $g(\textbf{x})$ is often referred to as the energy function or potential function. Inspired by this perspective, other particle-based variational inference (ParVIs) methods have been developed to simulate the gradient flow in the Wasserstein space. The particle optimization (PO) method \citep{chen2017particle} and the w-SGLD method \citep{chen2018unified} adopt the minimizing movement scheme \citep{jordan1998variational} to approximate the gradient flow using a set of particles. The Blob method \citep{chen2018unified} uses the vector field formulation of the gradient flow and approximates the update direction using particles.
However, when $F$ is not the energy functional and $\pi$ is represented by a set of samples, it becomes non-trivial to define the Wasserstein gradient flow of $F$. To address this challenge, the VT algorithm \citep{liu2021infinite} assumes that $F$ admits a variational form. This assumption allows the Wasserstein gradient to be estimated using samples from both the current empirical distribution and the target distribution, enabling  particle updates in specified directions. Building on VT, MirrorVT \citep{nguyen2023mirror} extends this framework to optimize $F(q)$ when $q$ is defined over a constrained domain. MYVT \citep{nguyen2024moreau} further generalizes the approach to regularized distributional optimization, where the objective is composed of two functionals: one with a variational form and the other expressed as the expectation of a possibly nonsmooth convex function.\\

\noindent
\textbf{Multi-Objective Optimization (MOO)}. Several gradient-based techniques have been proposed for MOO. Among the most popular is MGDA \citep{desideri2012multiple}, which seeks to find a conflict-avoidant update direction that maximizes the minimal improvement across all objectives. PCGrad \citep{yu2020gradient} mitigates the gradient conflict by projecting the gradient of each task on the norm plane of other tasks. GradDrop \citep{chen2020just} randomly drops out conflicted gradients, while CAGrad \citep{liu2021conflict} adds a constraint to ensure the update direction is close to the average gradient. The methods most closely related to our work are MOO-SVGD \citep{liu2021profiling} and MT-SGD \citep{phan2022stochastic}, which enable sampling from multiple target distributions, a task that can be seen as an instance of MODO. MT-SGD aims to update particles in a way that brings them closer to all target distributions, effectively generating diverse particles that lie within the joint high-likelihood region of all targets. In contrast, MOO-SVGD uses MGDA \citep{desideri2012multiple} to update the particles individually and independently. In our synthetic experiments, we observe that the behavior of our proposed algorithm MWGraD closely resembles MT-SGD.

\section{Preliminaries}
\subsection{Basic Concepts of Optimal Transport and Wasserstein Space}
Optimal transport \citep{villani2021topics} has received much attention in the machine learning community and has been shown to be an effective tool for comparing probability distributions in many applications \citep{nguyen2023linear,nguyen2021learning,petric2019got, nguyen2023wasserstein}.
Formally, given a measurable map $T:\mathcal{X}\rightarrow \mathcal{X}$ and $p\in \mathcal{P}_{2}(\mathcal{X})$, we say that $q$ is the \textit{push-forward measure} of $p$ under $T$, denoted by $q=T\sharp p$, if for every Borel set $E\subseteq \mathcal{X}$, $q(E)=p(T^{-1}(E))$. For any $p,q\in \mathcal{P}_{2}(\mathcal{X})$, the $2$-Wasserstein distance $\mathcal{W}_{2}(p,q)$ is defined as
\begin{align*}
   \mathcal{W}_{2}^{2}(p, q)= \inf_{\pi \in \Pi(p,q)} \int_{\mathcal{X}\times \mathcal{X}} \lVert \textbf{x} - \textbf{x}^\prime\rVert_{2}^{2}\mathrm{d}\pi(\textbf{x},\textbf{x}^\prime),
\end{align*}
where $\Pi(p,q)$ is all probability measures on $\mathcal{X}\times \mathcal{X}$ whose two marginals are equal to $p$ and $q$, and $\lVert\cdot \rVert_{2}$ denotes the Euclidean norm. The metric space $(\mathcal{P}_{2}(\mathcal{X}), \mathcal{W}_{2})$, also known as Wasserstein space, is an infinite-dimensional geodesic space \citep[Definition 6.4]{villani2009optimal}. Furthermore, we can endow the manifold $\mathcal{P}_{2}(\mathcal{X})$ with a Riemannian metric \textcolor{blue}{\citep[p.~250]{villani2021topics}}, as follows: for any $s_{1},s_{2}$ are two tangent vectors at $p$, where $s_{1},s_{2} \in \mathcal{T}_{p}\mathcal{P}_{2}(\mathcal{X})$, and $\mathcal{T}_{p}\mathcal{P}_{2}(\mathcal{X})$ denotes the space of tangent vectors at $p$, let $u_{1},u_{2}:\mathcal{X}\rightarrow \mathbb{R}$ be the solutions to the following elliptic equations $s_{1}=- \texttt{div}(\rho \nabla u_{1})$ and $s_{2}=- \texttt{div}(\rho \nabla u_{2})$, respectively, where $\texttt{div}$ denotes the divergence operator on $\mathcal{X}$. The inner product between $s_{1}$ and $s_{2}$ is defined as
\begin{equation*}
    \langle s_{1}, s_{2}\rangle_{p}=\int_{\mathcal{X}}\langle \nabla u_{1}(\textbf{x}),\nabla u_{2}(\textbf{x})\rangle p(\textbf{x})d\textbf{x}.
\end{equation*}

\begin{defn}
    \label{def:firstvariational}
    \textbf{(First variation of a functional)} Given a functional $F:\mathcal{P}_{2}(\mathcal{X})\rightarrow \mathbb{R}$,
    the first variation of $F$ evaluated at $p$, denoted by $\delta F(p):\mathcal{X}\rightarrow \mathbb{R}$, is  given as follows
\begin{align*}
    \lim_{\epsilon\rightarrow 0}\frac{1}{\epsilon} \left( F(p+\epsilon \chi) - F(p) \right)=
    \int_{\mathcal{X}} \delta F(p)(\textbf{x})\chi(\textbf{x})\mathrm{d}\textbf{x},
\end{align*}
for all $\chi=q-p$, where $q\in \mathcal{P}_{2}(\mathcal{X})$.
\end{defn}
\noindent
With mild regularity assumptions, the Wasserstein gradient of $F$, denoted by $\texttt{grad}F$, relates to the gradient of the first variation of $F$ via the following continuity equation
\begin{align}
\label{eqn:grad}
\begin{split}
        &\texttt{grad}F(p)(\textbf{x})=-\texttt{div}\left(p(\textbf{x})\nabla\delta F(p)(\textbf{x})\right),\\
        &\text{ for all }\textbf{x}\in \mathcal{X}.
\end{split}
\end{align}
We refer the readers to \citep{santambrogio2015optimal} for details.

\subsection{Basic Concepts of MOO and MGDA}
For a MOO problem, very often, no single solution can optimize all the objectives at the same time. For instance, it could have two vectors where one performs better for the objective $k$ and the other performs better for the objective $l\neq k$. Thus, Pareto optimality is defined to address such an incomparable case.
\begin{defn}
    \textbf{(Pareto optimality)} For two solutions $\textbf{x},\textbf{y}\in\mathcal{X}$ of (\ref{definition:MOO}), we say that $\textbf{y}$ dominates $\textbf{x}$ if $f_{k}(\textbf{x})\leq f_{k}(\textbf{y})$ for all $k\in[K]$ and $\textbf{f}(\textbf{x}) \neq \textbf{f}(\textbf{y})$. A solution $\textbf{x}$ is Pareto optimal if no other solution in $\mathcal{X}$ dominates it. 
\end{defn}
\noindent
Note that there is a set of Pareto optimal solutions, called Pareto set. Let denote the probability simplex as $\mathcal{W}=\left\{\textbf{w}=(w_{1},...,w_{K})^\top| \textbf{w}\geq 0, \sum_{k=1}^{K}w_{k}=1 \right\}$, we introduce the concept of Pareto Stationarity (also referred to as Pareto Criticality) \citep{custodio2011direct} as follows.

\begin{defn}
    \textbf{(Pareto Stationary Solution)} A solution $\textbf{x}^{*}\in\mathcal{X}$ is a Pareto stationary solution iff some convex combination of the gradients $\left\{\nabla f_{k}(\textbf{x}^{*}) \right\}$ vanishes, i.e., there exists some $\textbf{w}\in\mathcal{W}$ such that $\nabla \textbf{f}(\textbf{x}^{*}) \textbf{w}=\sum_{k=1}^{K}w_{k}\nabla f_{k}(\textbf{x}^{*})=\mathbf{0}$, where $\nabla \textbf{f}(\textbf{x})=\left[\nabla f_{1}(\textbf{x}), f_{2}(\textbf{x}),...,f_{K}(\textbf{x}) \right]$.
\end{defn}
\noindent
MGDA \citep{desideri2012multiple} has gained significant attention in machine learning recently largely because of its gradient-based nature, in contrast to traditional MOO methods. In each iteration, MGDA updates the parameters as follows: $\textbf{x}^{(t+1)}=\textbf{x}^{(t)} - \alpha \textbf{d}^{(t)}$, where $\alpha$ is the learning rate and $\textbf{d}^{(t)}$ is the MGDA search direction at $t$-th iteration. The key idea of MGDA is to find a update direction $\textbf{d}^{(t)}$ that maximizes the minimum decrease across all the objective by solving the following primal problem
\begin{equation*}
\textbf{d}^{(t)}=\argmin_{\textbf{d}\in\mathbb{R}^{d}} \left\{ \max_{k\in [K]} \langle \nabla f_{k}(\textbf{x}^{(t)}), \textbf{d} \rangle + \frac{1}{2}\lVert \textbf{d} \rVert^{2}_{2} \right\}.
\end{equation*}
\noindent
To simplify the optimization, such primal problem has a dual objective as a min-norm oracle
\begin{equation*}
    \textbf{w}^{(t)}=\argmin_{\textbf{w}\in\mathcal{W}} \lVert \sum_{k=1}^{K}w_{k}\nabla f_{k}(\textbf{x}^{(t)}) \rVert^{2}_{2}.
\end{equation*}
The update direction is then calculated by $\textbf{d}^{(t)}=\nabla \textbf{f}(\textbf{x}^{(t)}) \textbf{w}^{(t)}= \sum_{k=1}^{K}w_{k}^{(t)}\nabla f_{k}(\textbf{x}^{(t)})$.

\section{Multiple Wasserstein Gradient Descent Algorithm }
In this section, we introduce MWGraD, an iterative algorithm for solving the MODO problem (\ref{definition:MODO}). Specifically, we construct a flow of distributions $q^{(0)}$, $q^{(1)}$,..., $q^{(T)}$, starting with a simple distribution $q^{(0)}$, such as  standard normal distribution, and progressively minimizing all the objective functionals $\left(F_{1}(q), F_{2}(q),...,F_{K}(q) \right)$ simultaneously. We begin by reformulating the MODO problem, then present the method for aggregating multiple Wasserstein gradients, followed by the detailed descriptions of algorithms.
\subsection{Reformulation of MODO and Multiple Wasserstein Gradient Aggregation Method}
To construct the sequence $\{q^{(t)}\}_{t\geq 0}$, we first reformulate the MODO problem (\ref{definition:MODO}) as follows. For any $q\in \mathcal{P}_{2}(\mathcal{X})$ and any tangent vector $s\in \mathcal{T}_{q}\mathcal{P}_{2}(\mathcal{X})$ at $q$, let $\gamma:[0,1]\rightarrow \mathcal{P}_2(\mathcal{X})$ be a curve satisfying $\gamma(0)=q$ and $\gamma^\prime(0)=s$. By the definition of directional derivative, we have that
\begin{equation}
    \lim_{h\rightarrow 0}\frac{1}{h}\left[F_{k}(\gamma(h)) - F_{k}(q) \right]=\int_{\mathcal{X}} \delta F_{k}(q)(\textbf{x})s(\textbf{x})d\textbf{x}. 
\end{equation}
Let $\textbf{u}:\mathcal{X}\rightarrow \mathcal{X}$ be a vector field, where $\textbf{u}\in\mathcal{V}$, and $\mathcal{V}$ denotes the space of velocity fields $\textbf{u}$. Assume that $\textbf{u}$ satisfies the following elliptic equation
\begin{equation}
    s(\textbf{x}) + \text{div}(q(\textbf{x})\textbf{u}(\textbf{x})) = 0, \forall \textbf{x}\in \mathcal{X}.
\end{equation}
By the integration by parts, we obtain
\begin{align}
\label{eqn:1}
\begin{split}
    &\lim_{h\rightarrow 0}\frac{1}{h}\left[F_{k}(\gamma(h)) - F_{k}(q) \right] =\\
    &-\int_{\mathcal{X}} \delta F_{k}(q)(\textbf{x})\text{div}(q(\textbf{x})\textbf{u}(\textbf{x}))d\textbf{x} \\
    &= -\int_{\mathcal{X}} \texttt{div}\left(\delta F_{k}(q)(\textbf{x}) \textbf{u}(\textbf{x}) q(\textbf{x})\right)d\textbf{x}\\
    &+\int_{\mathcal{X}} \langle\nabla \delta F_{k}(q)(\textbf{x}), \textbf{u}(\textbf{x})\rangle q(\textbf{x})d\textbf{x}.
\end{split}
\end{align}
By the divergence theorem \citep{rudin2021principles}, it holds that the first term on the right-hand side is equal to zero. Thus, we obtain that
\begin{align}
\label{eqn:2}
\begin{split}
    &\lim_{h\rightarrow 0}\frac{1}{h}\left[F_{k}(\gamma(h)) - F_{k}(q) \right] \\
    &=\int_{\mathcal{X}} \langle\nabla \delta F_{k}(q)(\textbf{x}), \textbf{u}(\textbf{x})\rangle q(\textbf{x})d\textbf{x}.
\end{split}
\end{align}

We can similarly rewrite (\ref{eqn:2}) for the opposite direction of the update as follows
\begin{align}
\label{eqn:3}
\begin{split}
    &\lim_{h\rightarrow 0}\frac{1}{h}\left[F_{k}(q) - F_{k}(\gamma(h)) \right]\\
    &=\int_{\mathcal{X}} \langle \nabla \delta F_{k}(q)(\textbf{x}), \textbf{v}(\textbf{x})\rangle q(\textbf{x})d\textbf{x}, 
\end{split}
\end{align}
where $\textbf{v}\in\mathcal{V}$ and $\textbf{v}(\textbf{x})=-\textbf{u}(\textbf{x}),\forall \textbf{x}\in\mathcal{X}$. In the MODO problem (\ref{definition:MODO}), there are multiple potentially conflicting objectives $\left\{ F_{k}\right\}_{k=1}^{K}$. Inspired by \citep{desideri2012multiple}, we aim to estimate a tangent vector $s^{(t)}$, for the $t$-th iteration, that minimizes each objective $F_{k}$. Based on (\ref{eqn:3}), we reformulate the problem as follows: we aim to find $s^{(t)}$ that maximizes the minimum decrease across all the objectives
\begin{align}
\label{eqn:4}
\begin{split}
    \max_{s\in \mathcal{T}_{q^{(t)}}\mathcal{P}_{2}(\mathcal{X})}\min_{k\in \left[K\right]}\frac{1}{h}\left(F_{k}(q^{(t)})- F_{k}(\gamma(h)) \right)\\
    \approx \max_{\textbf{v}\in \mathcal{V}}\min_{k\in \left[K\right]}\int_{\mathcal{X}} \langle \nabla \delta F_{k}(q^{(t)})(\textbf{x}), \textbf{v}(\textbf{x})\rangle q^{(t)}(\textbf{x})d\textbf{x}, 
\end{split}
\end{align}
where the approximation is based on (\ref{eqn:3}) and $\gamma:[0,1]\rightarrow \mathcal{P}_2(\mathcal{X})$ is a curve satisfying that $\gamma(0)=q^{(t)}$ and $\gamma^\prime(0)=s$. To regularize the update direction (i.e., vector field $\textbf{v}$), we introduce a regularization term to (\ref{eqn:4}) and solve for $s^{(t)}$ by optimizing
\begin{align}
\label{optimizationproblem}
\begin{split}
   &\max_{\textbf{v}\in \mathcal{V}}\min_{k\in \left[K\right]}\int_{\mathcal{X}} \langle \nabla \delta F_{k}(q^{(t)})(\textbf{x}), \textbf{v}(\textbf{x})\rangle q^{(t)}(\textbf{x})d\textbf{x} - \\
   &\frac{1}{2}\int_{\mathcal{X}} \lVert \textbf{v}(\textbf{x})\rVert^{2}_{2}q^{(t)}(\textbf{x})d\textbf{x}.
\end{split}
\end{align}
\noindent
The following theorem provides the solution to problem (\ref{optimizationproblem}).
\begin{thm}
\label{theorem1}
    Problem (\ref{optimizationproblem}) has a solution $\textbf{v}^{(t)}$ as follows. For $\textbf{x}\in \mathcal{X}$, we have that
    \begin{equation}
    \label{eqn:vt}
        \textbf{v}^{(t)}(\textbf{x})= \textbf{V}^{(t)}(\textbf{x}) \textbf{w}^{*}=\sum_{k=1}^{K}w^{*}_{k}\textbf{v}^{(t)}_{k}(\textbf{x}),
    \end{equation}
    where $\textbf{v}^{(t)}_{k}(\textbf{x}) = \nabla \delta F_{k}(q^{(t)})(\textbf{x})$ for $k\in[K]$, $\textbf{V}^{(t)}(\textbf{x})= \left[\textbf{v}^{(t)}_{1}(\textbf{x}), \textbf{v}^{(t)}_{2}(\textbf{x}),...,\textbf{v}^{(t)}_{K}(\textbf{x}) \right]$, and 
    \begin{equation}
        \textbf{w}^{*} = \argmin_{\textbf{w}\in\mathcal{W}}\frac{1}{2}\int_{\mathcal{X}}\lVert \textbf{V}^{(t)}(\textbf{x}) \textbf{w} \rVert^{2}_{2}q^{(t)}(\textbf{x})d\textbf{x}.
    \label{problem:solveweights}
    \end{equation}
\end{thm}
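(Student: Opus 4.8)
The plan is to recognize that (\ref{optimizationproblem}) is the infinite-dimensional analogue of the MGDA primal problem, and to establish the stated primal--dual relationship by a minimax argument followed by a pointwise optimization over the velocity field. First I would rewrite the inner minimum over $k\in[K]$ as a minimum over the probability simplex, using the elementary identity $\min_{k\in[K]} a_k = \min_{\textbf{w}\in\mathcal{W}}\sum_{k=1}^K w_k a_k$ with $a_k = \int_{\mathcal{X}}\langle \textbf{v}^{(t)}_k(\textbf{x}),\textbf{v}(\textbf{x})\rangle q^{(t)}(\textbf{x})d\textbf{x}$. This converts (\ref{optimizationproblem}) into the max--min problem
\begin{equation*}
\max_{\textbf{v}\in\mathcal{V}}\;\min_{\textbf{w}\in\mathcal{W}}\;\Phi(\textbf{v},\textbf{w}),
\end{equation*}
where
\begin{align*}
\Phi(\textbf{v},\textbf{w})=&\int_{\mathcal{X}}\langle \textbf{V}^{(t)}(\textbf{x})\textbf{w},\textbf{v}(\textbf{x})\rangle q^{(t)}(\textbf{x})d\textbf{x}\\
&-\tfrac{1}{2}\int_{\mathcal{X}}\lVert\textbf{v}(\textbf{x})\rVert^{2}_{2} q^{(t)}(\textbf{x})d\textbf{x}.
\end{align*}

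The functional $\Phi$ is affine (hence concave) in $\textbf{w}$ and strictly concave in $\textbf{v}$ thanks to the quadratic regularizer, while $\mathcal{W}$ is convex and compact. I would therefore invoke Sion's minimax theorem to exchange the order of optimization and obtain $\min_{\textbf{w}\in\mathcal{W}}\max_{\textbf{v}\in\mathcal{V}}\Phi(\textbf{v},\textbf{w})$. Next, for fixed $\textbf{w}$ I would solve the inner maximization over $\textbf{v}$. Since the nonnegative weight $q^{(t)}(\textbf{x})$ multiplies a pointwise integrand, the maximization decouples across $\textbf{x}$, and the optimum of the concave quadratic $\langle \textbf{V}^{(t)}(\textbf{x})\textbf{w},\textbf{v}(\textbf{x})\rangle - \tfrac{1}{2}\lVert\textbf{v}(\textbf{x})\rVert^{2}_{2}$ is attained at $\textbf{v}(\textbf{x})=\textbf{V}^{(t)}(\textbf{x})\textbf{w}$. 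Substituting this maximizer back collapses the two terms into $\tfrac{1}{2}\int_{\mathcal{X}}\lVert\textbf{V}^{(t)}(\textbf{x})\textbf{w}\rVert^{2}_{2} q^{(t)}(\textbf{x})d\textbf{x}$, so the outer problem reduces exactly to (\ref{problem:solveweights}) and yields the minimizing weights $\textbf{w}^{*}$. Reading off the inner maximizer at $\textbf{w}=\textbf{w}^{*}$ then gives $\textbf{v}^{(t)}(\textbf{x})=\textbf{V}^{(t)}(\textbf{x})\textbf{w}^{*}=\sum_{k=1}^K w^{*}_k \textbf{v}^{(t)}_k(\textbf{x})$, as claimed in (\ref{eqn:vt}).

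I expect the main obstacle to be the rigorous justification of the two interchanges in the infinite-dimensional setting. Applying Sion's theorem requires specifying $\mathcal{V}$ as a suitable topological vector space, most naturally the space $L^{2}(q^{(t)};\mathcal{X})$ of $q^{(t)}$-square-integrable velocity fields, in which $\Phi(\cdot,\textbf{w})$ is coercive and upper semicontinuous in $\textbf{v}$; one should also verify that $\nabla\delta F_k(q^{(t)})\in L^{2}(q^{(t)})$ so that $\Phi$ is finite. The pointwise ``integrate-then-maximize equals maximize-then-integrate'' reduction must be justified by a measurable-selection argument confirming that the pointwise maximizer $\textbf{x}\mapsto\textbf{V}^{(t)}(\textbf{x})\textbf{w}$ is measurable and defines an admissible element of $\mathcal{V}$; this is where most of the technical care is needed. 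The remaining computation---differentiating the pointwise quadratic, verifying the second-order condition, and simplifying the substituted objective---is routine.
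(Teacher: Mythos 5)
Your proposal is correct, and it reaches (\ref{eqn:vt}) and (\ref{problem:solveweights}) by a genuinely different route than the paper. The paper does not use a minimax theorem: it rewrites (\ref{optimizationproblem}) with an epigraph variable $\mu$ as $\min_{\textbf{v},\mu}\{-\mu+\tfrac{1}{2}\int_{\mathcal{X}}\langle\textbf{v},\textbf{v}\rangle q^{(t)}\,d\textbf{x}\}$ subject to $\mu\leq\int_{\mathcal{X}}\langle\textbf{v}^{(t)}_{k},\textbf{v}\rangle q^{(t)}\,d\textbf{x}$ for $k\in[K]$, introduces Lagrange multipliers $\textbf{w}$ for the $K$ constraints, obtains $\textbf{v}^{*}=\sum_{k}w_{k}\textbf{v}^{(t)}_{k}$ from stationarity of the Lagrangian in $\textbf{v}$, and substitutes back to get the min-norm problem in $\textbf{w}$ (the simplex constraint on $\textbf{w}$ arising from stationarity in $\mu$). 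You instead linearize $\min_{k}$ over the simplex and exchange $\max$ and $\min$ via Sion's theorem, then maximize pointwise in $\textbf{v}$. These are two faces of the same convex duality, but your version makes explicit what the paper leaves implicit: it justifies the exchange (the paper's Lagrangian argument tacitly assumes strong duality and the existence of multipliers) and pins down the functional-analytic setting ($\textbf{v}\in L^{2}(q^{(t)})$, measurability of $\textbf{x}\mapsto\textbf{V}^{(t)}(\textbf{x})\textbf{w}$); conversely, the paper's computation is more elementary and directly parallels the finite-dimensional MGDA derivation. One step you should spell out: after the exchange, concluding that $\textbf{v}^{*}=\textbf{V}^{(t)}\textbf{w}^{*}$ solves the original max--min problem, rather than merely attaining the common optimal value when paired with $\textbf{w}^{*}$, requires the saddle-point property. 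This follows either from the variational inequality characterizing $\textbf{w}^{*}$ in (\ref{problem:solveweights}), which gives $\int_{\mathcal{X}}\langle\textbf{V}^{(t)}(\textbf{x})\textbf{w},\textbf{V}^{(t)}(\textbf{x})\textbf{w}^{*}\rangle q^{(t)}(\textbf{x})\,d\textbf{x}\geq\int_{\mathcal{X}}\lVert\textbf{V}^{(t)}(\textbf{x})\textbf{w}^{*}\rVert^{2}_{2}\,q^{(t)}(\textbf{x})\,d\textbf{x}$ for all $\textbf{w}\in\mathcal{W}$ (so $\textbf{w}^{*}$ also minimizes $\Phi(\textbf{v}^{*},\cdot)$), or from strict concavity of $\Phi(\cdot,\textbf{w}^{*})$, which forces any max--min solution to coincide with $\textbf{v}^{*}$.
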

\noindent
The proof follows from the method of Lagrange multipliers and is detailed in Appendix \ref{appendix:proofoftheorem1}. Note that the velocity used to update the current particles from $q^{(t)}$ is computed as a weighted sum of the velocities corresponding to each of $K$ objective functionals. The weights are obtained by solving the min-norm oracle in Theorem \ref{theorem1}. 

\subsection{Algorithm and Implementation}
To make the solution for $\textbf{w}$ in the min-norm oracle (Theorem \ref{theorem1}) computationally feasible, we approximate $q^{(t)}$ using $m$ particles $\left\{\textbf{x}^{(t)}_{i} \right\}_{i=1}^{m}$. Each particle is updated as $\textbf{x}^{(t+1)}_{i} = \textbf{x}^{(t)}_{i} - \alpha \textbf{v}^{(t)}(\textbf{x}^{(t)}_{i} )$, where $\alpha$ is the step size.
The velocity fields $\textbf{v}^{(t)}_{k}=\nabla \delta F_{k}(q^{(t)})$ for $k\in[K]$ need to be computed at each step, but exact computation is difficult, so we focus on approximation methods for $\textbf{v}^{(t)}_{k}$ for two specific forms of $F_{k}(q)$.\\

\noindent
\textbf{Energy Functional}. Consider $F_{k}(q)$ to be defined as
\begin{align}
\label{energyfunctional}
\begin{split}
    F_{k}(q)&=  \int_{\mathcal{X}} g_{k}(\textbf{x})q(\textbf{x})d\textbf{x} + \int_{\mathcal{X}} \log q(\textbf{x})q(\textbf{x})d\textbf{x}\\
    &= KL(q|| \exp{\left\{-g_{k}\right\}}).
\end{split}
\end{align}
\noindent
This form of energy functional is commonly used in Bayesian learning, where the goal is to approximate the posterior distribution with $q$. It is straightforward to verify that the first variation of $F_{k}(q)$ is $\delta F_{k}(q)(\textbf{x})= g_{k}(\textbf{x})+ \log q(\textbf{x}) + 1$. Thus, the velocity $\textbf{v}^{(t)}_{k}(\textbf{x})$ can be computed as the gradient of $\delta F_{k}(q^{(t)})(\textbf{x})$. However, directly applying the particle-based approximation is infeasible because the term $\log q^{(t)}(\textbf{x})$ is undefined with discrete representations of $q^{(t)}(\textbf{x})$. To address this issue, we present two commonly used techniques for approximating $\textbf{v}^{(t)}_{k}(\textbf{x})$, SVGD \citep{liu2016stein}, and Blob methods \citep{carrillo2019blob}, both of which are kernel-based.\\

\noindent
We apply the idea of \textbf{SVGD} \citep{liu2016stein} to approximate $\textbf{v}^{(t)}_{k}(\textbf{x})$ with $\Tilde{\textbf{v}}_{k}^{(t)}(\textbf{x})$, given by
\begin{equation*}
    \Tilde{\textbf{v}}_{k}^{(t)}(\textbf{x}) =\mathbb{E}_{\textbf{y}\sim q^{(t)}}\left[K(\textbf{x},\textbf{y})\left(\nabla g_{k}(\textbf{y})+\nabla \log q^{(t)}(\textbf{y})\right) \right]. 
\end{equation*}
\noindent
Applying integration by parts, we obtain
\begin{align}
\begin{split}
    &\Tilde{\textbf{v}}_{k}^{(t)}(\textbf{x}) =\\
    &=  \int_{\mathcal{X}} K(\textbf{x}, \textbf{y})\nabla g_{k}(\textbf{y})q^{(t)}(\textbf{y})d\textbf{y} + \int_{\mathcal{X}} K(\textbf{x},\textbf{y})\nabla q^{(t)}(\textbf{y})d\textbf{y}\\
    & =  \int_{\mathcal{X}} K(\textbf{x}, \textbf{y})\nabla g_{k}(\textbf{y})q^{(t)}(\textbf{y})d\textbf{y} - \int_{\mathcal{X}} \nabla_{\textbf{y}} K(\textbf{x},\textbf{y}) q^{(t)}(\textbf{y})d\textbf{y}.
    \end{split}
\end{align}
\noindent
Thus, the particle approximation of $\textbf{v}^{(t)}_{k}$ becomes
\begin{equation}
\label{approximate-svgd}
    \Tilde{\textbf{v}}_{k}^{(t)}(\textbf{x}) =  \sum_{j=1}^{m}K(\textbf{x}, \textbf{x}^{(t)}_{j})\nabla g_{k}(\textbf{x}^{(t)}_{j}) - \sum_{j=1}^{m}\nabla_{\textbf{x}^{(t)}_{j}}K(\textbf{x}^{(t)}_{i}, \textbf{x}^{(t)}_{j})
\end{equation}

\noindent
We can also apply the idea of \textbf{Blob methods} \citep{carrillo2019blob} to smooth the second term of the energy functional by the kernel function $K$, and approximate the velocity field. Specifically, the particle approximation for $\textbf{v}^{(t)}_{k}$ is given by
\begin{align}
\label{approximate-blob}
\begin{split}
     \Tilde{\textbf{v}}_{k}^{(t)}(\textbf{x}) &=  \nabla g_{k}(\textbf{x}) - \sum_{j=1}^{m}\nabla_{\textbf{x}^{(t)}_{j}} K(\textbf{x}, \textbf{x}^{(t)}_{j})/ \sum_{l=1}^{m}K(\textbf{x}^{(t)}_{j}, \textbf{x}^{(t)}_{l})\\
    &- \sum_{j=1}^{m}\nabla_{\textbf{x}^{(t)}_{j}}K(\textbf{x}, \textbf{x}^{(t)}_{j})/\sum_{l=1}^{m}K(\textbf{x}, \textbf{x}^{(t)}_{j}).
\end{split}
\end{align}
For the detailed derivation of the update, see Proposition 3.12 in \citep{carrillo2019blob}.\\

\noindent
\textbf{Dissimilarity Functions}. Let $F_{k}(q)$ be defined as a dissimilarity function $D$ between $q$ and the target distribution $\pi_{k}$, characterized by a set of samples. Common dissimilarity functions include KL divergence and JS divergence. In this case, estimating the first variation $\delta F_{k}(q)$ is not straightforward. To address this issue, following \citep{liu2021infinite}, we assume that $F_{k}(q)$ has the variational form as follows:
\begin{align}
\label{eqn:varform}
    F_{k}(q) = D(q, \pi_{k})=\sup_{h_{k}\in\mathcal{H}}{\left\{ \mathbb{E}_{\textbf{x}\sim q}\left[ h_{k}(\textbf{x})\right]-F^{*}_{k}(h_{k})\right\}},
\end{align}
where $\mathcal{H}$ is a class of square-integrable functions on $\mathcal{X}$ with respect to the Lebesgue measure, and $F_{k}^{*}:\mathcal{H}\rightarrow \mathbb{R}$ is a convex conjugate functional of $F_{k}$, for $k\in[K]$. Note that the variational form of $F_{k}$ involves a supremum over a function class $\mathcal{H}$. 
We restrict $\mathcal{H}$ to a subset of the square-integrable functions, such as a class of deep neural networks or a Reproducing kernel Hilbert space (RKHS), which allows for a numerically feasible maximization process. More importantly, it is shown in \citep{liu2021infinite} that the optimal solution $h_{k}^{*}$ to the problem (\ref{eqn:varform}) is the first variation of $F_{k}$, i.e. $h_{k}^{*}=\delta F_{k}(q)$. As an example, when $D_{k}$ is the KL divergence, its variational form is
\begin{align}
\label{eqn:varformKL}
    KL(q,\pi_{k}) = \sup_{h_{k}\in\mathcal{H}}{\left\{\mathbb{E}_{\textbf{x}\sim q}\left[ h_{k}(\textbf{x})\right]-\log\mathbb{E}_{\textbf{x}\sim \pi_{k}} \left[e^{h_{k}(\textbf{x})}\right]\right\}}.
\end{align}
We can parameterize $h_{k}$ using a neural network $h_{\theta_{k}}$, where $\theta_{k}$ denotes its parameters. The parameters can be estimated using stochastic gradient descent to maximize the following empirical objective function
\begin{equation}
    \max_{\theta_{k}} \frac{1}{m}\sum_{i=1}^{m}h_{\theta_{k}}(\textbf{x}^{(t)}_{i}) - \log \frac{1}{n}\sum_{j=1}^{n}\exp (h_{\theta_{k}}(\textbf{y}_{k,j})),
\end{equation}
where the target $\pi_{k}$ is represented by the set of samples $\left\{\textbf{y}_{k,j} \right\}_{j=1}^{n}$. The neural network $h_{\theta^{*}_{k}}$ after learning can be used to estimate the velocity corresponding to $F_{k}$ by taking the gradient of the network with respect to its input:
\begin{equation}
\label{approximate-nn}
    \Tilde{\textbf{v}}^{(t)}_{k}(\textbf{x}^{(t)}_{i})=\nabla h_{\theta^{*}_{k}}(\textbf{x}^{(t)}_{i}), \text{ for } i\in [m], k\in[K].
\end{equation}

\noindent
\textbf{Update of $\textbf{w}^{(t)}$}. Following the approach of \citep{zhang2024convergence}, instead of computing the exact solution of $\textbf{w}$ to the optimization problem in Theorem \ref{theorem1}, we approximate it by taking one step of gradient descent and then use the updated weights to aggregate the velocity fields. Specifically, we update each particle $\textbf{x}^{(t)}_{i}$ along the following direction
\begin{equation}
    \Tilde{\textbf{v}}^{(t)}(\textbf{x}^{(t)}_{i})= \sum_{k=1}^{K}w_{k}^{(t)}\Tilde{\textbf{v}}_{k}^{(t)}(\textbf{x}_{i}^{(t)}),
\end{equation}
and update the weights $\textbf{w}^{(t)}$ as follows
\begin{align}
\label{eqn:updatew}
\begin{split}
    &\textbf{w}^{(t+1)} = \\
    &\Pi_{\mathcal{W}}\left(\textbf{w}^{(t)} - \beta \left[ \sum_{i=1}^{m} \left(\Tilde{\textbf{V}}^{(t)}(\textbf{x}_{i}^{(t)})\right)^\top\Tilde{\textbf{V}}^{(t)}(\textbf{x}_{i}^{(t)})\right]\textbf{w}^{(t)}\right),
\end{split}
\end{align}
where $\Tilde{\textbf{V}}^{(t)}(\textbf{x})=\left[\Tilde{\textbf{v}}^{(t)}_{1}(\textbf{x}),\Tilde{\textbf{v}}^{(t)}_{2}(\textbf{x}),...,\Tilde{\textbf{v}}^{(t)}_{K}(\textbf{x}) \right]$ is the approximation of $\textbf{V}^{(t)}(\textbf{x})$, $\Pi_{\mathcal{W}}$ is the projection operator on the simplex $\mathcal{W}$, and $\beta$ is the step size for updating $\textbf{w}$. Taking all into account, we have Algorithm \ref{alg:mwgrad} (see Appendix \ref{appendix:algorithm}).

\subsection{Convergence Analysis}
In this section, we analyze the convergence of MWGrad. We first define the Pareto stationary points in the space of probability distributions, and then characterize how MWGraD can converge to the Pareto stationary points. 

\begin{defn}
    $q\in \mathcal{P}_{2}(\mathcal{X})$ is a Pareto stationary point if 
    \begin{equation*}
        \min_{\textbf{w}\in\mathcal{W}}\langle\texttt{grad} \textbf{F}(q)\textbf{w}, \texttt{grad} \textbf{F}(q)\textbf{w}\rangle_{q}=0,
    \end{equation*}
    where $\texttt{grad} \textbf{F}(q)(\textbf{x})=\left[\texttt{grad} F_{1}(q)(\textbf{x}),\texttt{grad} F_{2}(q)(\textbf{x}),...,\texttt{grad} F_{k}(q)(\textbf{x}) \right]$, and $\texttt{grad} \textbf{F}(q)(\textbf{x})\textbf{w}=\sum_{k=1}^{K}w_{k}\texttt{grad}F_{k}(q)(\textbf{x})$. Further, we call $q$ an $\epsilon$-accurate Pareto stationary distribution if 
    \begin{equation*}
        \min_{\textbf{w}\in\mathcal{W}}\langle\texttt{grad} \textbf{F}(q)\textbf{w}, \texttt{grad} \textbf{F}(q)\textbf{w}\rangle\leq \epsilon^{2}.
    \end{equation*}
A detailed discussion on Pareto stationary distribution can be found in Appendix \ref{appendix:pareto}.
\end{defn}
\noindent
As previously discussed, we need to approximate the true velocity field $\textbf{v}^{(t)}_{k}$ with $\Tilde{\textbf{v}}^{(t)}_{k}$ for $k\in [K]$. The true Wasserstein gradient of $F_{k}$ at $q^{(t)}$ is given by $\texttt{grad} F_{k}(q^{(t)})= -\texttt{div}(q^{(t)}\textbf{v}^{(t)}_{k})$, while its estimate is given by $-\texttt{div}(q^{(t)}\Tilde{\textbf{v}}^{(t)}_{k})$. The deviation between them is expressed as $\xi^{(t)}_{k}=-\texttt{div}(q^{(t)}(\Tilde{\textbf{v}}^{(t)}_{k} - \textbf{v}^{(t)}_{k}))\in \mathcal{T}_{q^{(t)}}\mathcal{P}_{2}(\mathcal{X})$. The gradient error is then defined as
\begin{align}
    \epsilon^{(t)}_{k}=\langle \xi^{(t)}_{k}, \xi^{(t)}_{k} \rangle_{q^{(t)}}=\int_{\mathcal{X}} \lVert  \Tilde{\textbf{v}}_{k}^{(t)}(\textbf{x})- \textbf{v}^{(t)}_{k}(\textbf{x}) \rVert^{2}_{2}q^{(t)}(\textbf{x})\mathrm{d}\textbf{x},
\end{align}
In addition, we assume that the gradient error is upper bounded by a constant $\sigma > 0$ for $k\in[K]$ and $t\geq 0$.

\noindent
\textbf{Assumption 1} (\textbf{Wasserstein gradient error}). We assume that there is a constant $\sigma>0$ such that 
\begin{equation*}
    \langle \xi^{(t)}_{k}, \xi^{(t)}_{k} \rangle_{q^{(t)}}=\epsilon^{(t)}_{k} \leq \sigma^{2}.
\end{equation*}
We further make the following assumption on the functionals $F_{k}$ to analyze the convergence of MWGraD. 

\noindent
\textbf{Assumption 2} (\textbf{Geodesic smoothness}). We assume that $F_{k}$ is geodesically $\ell_{k}$-smooth with respect to the $2$-Wasserstein distance, for $k\in[K]$, in the sense that for $\forall p$, $q\in \mathcal{P}_{2}(\mathcal{X})$
\begin{align}
\begin{split}
     F_{k}(q) &\leq F_{k}(p) + \langle \texttt{grad}F_{k}(p), \texttt{Exp}_{p}^{-1}(q)\rangle_{p} \\
     &+ \frac{\ell_{k}}{2}\cdot \mathcal{W}^{2}_{2}(p,q),
\end{split}
\end{align}
where $\texttt{Exp}_{p}$ denotes the exponential mapping, which specifies how to move $p$ along a tangent vector on $\mathcal{P}_{2}(\mathcal{X})$ and $\texttt{Exp}_{p}^{-1}$ denotes its inversion mapping, which maps a point on $\mathcal{P}_{2}(\mathcal{X})$ to a tangent vector. We refer the readers to \citep{santambrogio2015optimal} for more details. 
We then have the following theorem for the convergence of MWGraD.

\begin{thm}
\label{theorem:convergence_simple}
    Let Assumptions 1 and 2 hold, and $\epsilon > 0$ be a small constant. Set $\alpha \leq \mathcal{O}(\epsilon^{2})$, $\beta \leq \mathcal{O}(\epsilon^{2})$, $T\geq \max \left\{ \Theta(\frac{1}{\alpha \epsilon^{2}}), \Theta(\frac{1}{\beta \epsilon^{2}}) \right\}$. We then have that
    \begin{align}
    \label{eqn:graddientnorm}
    \begin{split}
        &\min_{0\leq t\leq T-1}\langle \texttt{grad} \textbf{F}(q^{(t)})\textbf{w}^{(t)}, \texttt{grad} \textbf{F}(q^{(t)})\textbf{w}^{(t)} \rangle_{q^{(t)}}  \\
        &\leq \frac{1}{T}\sum_{t=0}^{T-1}\langle \texttt{grad} \textbf{F}(q^{(t)})\textbf{w}^{(t)}, \texttt{grad} \textbf{F}(q^{(t)})\textbf{w}^{(t)} \rangle_{q^{(t)}} \\
        &\leq \mathcal{O}(\epsilon^{2}) + 3\sigma^{2}.
    \end{split}
    \end{align}
\end{thm}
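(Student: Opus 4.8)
The plan is to reduce the quantity of interest to the squared Wasserstein norm of the aggregated velocity field and then run a descent argument. First I would observe, via the continuity equation \eqref{eqn:grad} and the definition of the Riemannian metric, that $\texttt{grad}\textbf{F}(q^{(t)})\textbf{w}^{(t)} = -\texttt{div}(q^{(t)}\textbf{v}^{(t)})$ with $\textbf{v}^{(t)} = \textbf{V}^{(t)}\textbf{w}^{(t)} = \sum_{k}w_{k}^{(t)}\textbf{v}_{k}^{(t)}$, so that the target equals $\langle\texttt{grad}\textbf{F}(q^{(t)})\textbf{w}^{(t)}, \texttt{grad}\textbf{F}(q^{(t)})\textbf{w}^{(t)}\rangle_{q^{(t)}} = \int_{\mathcal{X}}\lVert\textbf{v}^{(t)}(\textbf{x})\rVert_{2}^{2}\,q^{(t)}(\textbf{x})\mathrm{d}\textbf{x} =: \lVert\textbf{v}^{(t)}\rVert_{q^{(t)}}^{2}$. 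Hence it suffices to bound the running average of $\lVert\textbf{v}^{(t)}\rVert_{q^{(t)}}^{2}$; the first inequality in \eqref{eqn:graddientnorm} (minimum $\le$ average) is then immediate.

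Next I would set up a one-step descent. The particle update $\textbf{x}_{i}^{(t+1)} = \textbf{x}_{i}^{(t)} - \alpha\Tilde{\textbf{v}}^{(t)}(\textbf{x}_{i}^{(t)})$ is the push-forward of $q^{(t)}$ by $\mathrm{id} - \alpha\Tilde{\textbf{v}}^{(t)}$, whose associated tangent vector has velocity field $-\alpha\Tilde{\textbf{v}}^{(t)}$ and which yields $\mathcal{W}_{2}^{2}(q^{(t)}, q^{(t+1)}) \le \alpha^{2}\lVert\Tilde{\textbf{v}}^{(t)}\rVert_{q^{(t)}}^{2}$ from the coupling induced by the map. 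Applying geodesic $\ell_{k}$-smoothness (Assumption 2) with $p = q^{(t)}$, $q = q^{(t+1)}$ to each $F_{k}$ and taking the $\textbf{w}^{(t)}$-weighted combination gives, with $\ell = \max_{k}\ell_{k}$, a bound of the form $\langle\textbf{w}^{(t)}, \textbf{F}(q^{(t+1)}) - \textbf{F}(q^{(t)})\rangle \le -\alpha\langle\textbf{v}^{(t)}, \Tilde{\textbf{v}}^{(t)}\rangle_{q^{(t)}} + \tfrac{\ell\alpha^{2}}{2}\lVert\Tilde{\textbf{v}}^{(t)}\rVert_{q^{(t)}}^{2}$. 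I would then convert between the true and estimated aggregated fields using $\Tilde{\textbf{v}}^{(t)} - \textbf{v}^{(t)} = \sum_{k}w_{k}^{(t)}(\Tilde{\textbf{v}}_{k}^{(t)} - \textbf{v}_{k}^{(t)})$, whose $q^{(t)}$-norm is at most $\sigma$ by Assumption 1 and convexity of the simplex; Cauchy--Schwarz and Young's inequality applied to $\langle\textbf{v}^{(t)}, \Tilde{\textbf{v}}^{(t)}\rangle_{q^{(t)}}$ and to $\lVert\textbf{v}^{(t)}\rVert_{q^{(t)}}^{2} \le (1+c)\lVert\Tilde{\textbf{v}}^{(t)}\rVert_{q^{(t)}}^{2} + (1+c^{-1})\sigma^{2}$ then produce, for $\alpha$ small enough, a per-step inequality $\tfrac{\alpha}{2}\lVert\textbf{v}^{(t)}\rVert_{q^{(t)}}^{2} \le \Phi^{(t)} - \langle\textbf{w}^{(t)}, \textbf{F}(q^{(t+1)})\rangle + \mathcal{O}(\alpha)\sigma^{2}$, where $\Phi^{(t)} = \langle\textbf{w}^{(t)}, \textbf{F}(q^{(t)})\rangle$. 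Tracking the constants in these Young steps is exactly what yields the $3\sigma^{2}$ in the final bound.

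To telescope $\Phi^{(t)}$ I must relate $\langle\textbf{w}^{(t)}, \textbf{F}(q^{(t+1)})\rangle$ to $\Phi^{(t+1)} = \langle\textbf{w}^{(t+1)}, \textbf{F}(q^{(t+1)})\rangle$, which introduces the weight-drift term $A_{t} = \langle\textbf{w}^{(t+1)} - \textbf{w}^{(t)}, \textbf{F}(q^{(t+1)})\rangle$. This is the crux: the weights are moved by one projected-gradient step \eqref{eqn:updatew} on the min-norm quadratic $\tfrac{1}{2}\langle(\Tilde{\textbf{V}}^{(t)})^{\top}\Tilde{\textbf{V}}^{(t)}\textbf{w}, \textbf{w}\rangle$ of Theorem \ref{theorem1}, which is a different objective from the one being descended, so $A_{t}$ has no a priori sign. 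I would control it by (i) non-expansiveness of $\Pi_{\mathcal{W}}$, giving $\lVert\textbf{w}^{(t+1)} - \textbf{w}^{(t)}\rVert \le \beta\lVert(\Tilde{\textbf{V}}^{(t)})^{\top}\Tilde{\textbf{v}}^{(t)}\rVert = \mathcal{O}(\beta)\,\lVert\Tilde{\textbf{v}}^{(t)}\rVert_{q^{(t)}}$ under boundedness of the estimated velocities, and (ii) the projected-gradient descent lemma for the convex weight-quadratic, which telescopes with a $1/\beta$ factor and whose per-step drift is $\mathcal{O}(\alpha)$ since $q^{(t)}$ moves only $\mathcal{O}(\alpha)$. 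A Young split then sends part of $A_{t}$ into the descent term $\tfrac{\alpha}{2}\lVert\textbf{v}^{(t)}\rVert_{q^{(t)}}^{2}$ and leaves residuals controlled by $1/(\beta T)$ and by boundedness of $\textbf{F}$ (a mild regularity condition beyond Assumptions 1--2, needed so that $\Phi^{(0)} - \Phi^{(T)} = \mathcal{O}(1)$).

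Finally I would sum the per-step inequality over $t = 0, \dots, T-1$ and divide by $\alpha T/2$. The optimization residuals then take the form $\mathcal{O}(1/(\alpha T))$ and $\mathcal{O}(1/(\beta T))$, which the hypotheses $T \ge \max\{\Theta(1/(\alpha\epsilon^{2})), \Theta(1/(\beta\epsilon^{2}))\}$ together with $\alpha, \beta \le \mathcal{O}(\epsilon^{2})$ drive below $\mathcal{O}(\epsilon^{2})$, while the accumulated gradient-error contributions collapse to $3\sigma^{2}$, giving $\tfrac{1}{T}\sum_{t}\lVert\textbf{v}^{(t)}\rVert_{q^{(t)}}^{2} \le \mathcal{O}(\epsilon^{2}) + 3\sigma^{2}$ as claimed. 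I expect the main obstacle to be precisely the weight-drift term $A_{t}$: because the two updates optimize different objectives, forcing its contribution to vanish requires the coupled calibration of $\alpha$, $\beta$, and $T$, and this coupling is exactly why the theorem pins the horizon to both step sizes.
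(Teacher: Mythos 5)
Your reduction to $\lVert \textbf{v}^{(t)}\rVert_{q^{(t)}}^{2}$, the push-forward bound $\mathcal{W}_{2}^{2}(q^{(t)},q^{(t+1)})\leq \alpha^{2}\lVert\Tilde{\textbf{v}}^{(t)}\rVert_{q^{(t)}}^{2}$, and the smoothness-plus-Young descent step all match the paper's argument. The genuine gap is exactly at the step you flag as the crux, and your proposed mechanism for it does not close. By choosing the time-varying Lyapunov function $\Phi^{(t)}=\langle \textbf{w}^{(t)},\textbf{F}(q^{(t)})\rangle$ you are forced to control $A_{t}=\langle \textbf{w}^{(t+1)}-\textbf{w}^{(t)},\textbf{F}(q^{(t+1)})\rangle$ in absolute value, and the only tools you bring to bear give $|A_{t}|=\mathcal{O}(\beta)\lVert\Tilde{\textbf{v}}^{(t)}\rVert_{q^{(t)}}$; after the Young split against the $-\frac{\alpha}{2}\lVert\textbf{v}^{(t)}\rVert^{2}$ term and normalization by $\alpha T$, the surviving residual is $\Theta(\beta^{2}/\alpha^{2})$ (or $\Theta(\beta/\alpha)$ without Young), and it is independent of $T$. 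Under the stated hypotheses (both step sizes $\mathcal{O}(\epsilon^{2})$; the formal version, Theorem \ref{theorem:convergence_detailed}, even requires $\alpha\leq b_{2}\beta$, so $\beta/\alpha$ is bounded \emph{below}) this residual is $\Theta(1)$, not $\mathcal{O}(\epsilon^{2})$, so your final bound does not follow. The paper sidesteps this entirely: it never forms $\Phi^{(t)}$, but fixes an \emph{arbitrary constant} comparator $\textbf{w}\in\mathcal{W}$ and tracks $\textbf{F}(q^{(t)})\textbf{w}$. The resulting cross term $\alpha\langle \texttt{grad}\textbf{F}(q^{(t)})(\textbf{w}^{(t)}-\textbf{w}),\Tilde{\textbf{V}}^{(t)}\textbf{w}^{(t)}\rangle_{q^{(t)}}$ is precisely the quantity that non-expansiveness of $\Pi_{\mathcal{W}}$ applied to the update \eqref{eqn:updatew} converts into the telescoping difference $\frac{1}{2\beta}\bigl(\lVert\textbf{w}^{(t)}-\textbf{w}\rVert_{2}^{2}-\lVert\textbf{w}^{(t+1)}-\textbf{w}\rVert_{2}^{2}\bigr)$ plus $\mathcal{O}(\beta)$ remainders (an online-gradient-descent regret bound). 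The telescoping, not a per-step absolute-value bound, is what makes the total weight-drift contribution $\mathcal{O}(1/(\beta T))+\mathcal{O}(\beta)$, which the hypotheses on $T$ and $\beta$ do kill. Your route (ii), a descent lemma for the min-norm quadratic of Theorem \ref{theorem1}, controls progress on $\frac{1}{2}\lVert\Tilde{\textbf{V}}^{(t)}\textbf{w}\rVert^{2}$, a quantity that never appears in the telescoping of $\Phi^{(t)}$, so it cannot substitute for this.

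A second, related gap: you invoke boundedness of the estimated velocities and boundedness of $\textbf{F}$ as ``mild regularity'' beyond Assumptions 1--2. In the paper these are theorems, not assumptions: the uniform gradient bound $M$ comes from Lemma \ref{lemma1} (geodesic smoothness plus bounded suboptimality implies bounded Wasserstein gradient norm), and the bounded suboptimality $F_{k}(q^{(t)})-F_{k}^{*}\leq C$ along the trajectory is established by induction on $t$, using the descent-plus-regret inequality itself to propagate the bound from step $\tau$ to $\tau+1$. This cannot be assumed away cheaply: for KL-type objectives $\textbf{F}$ is unbounded on $\mathcal{P}_{2}(\mathcal{X})$, and boundedness along the iterates is threatened precisely by the sign-indefinite drift you identified, so positing it is close to circular. (One could rescue your absolute-value treatment of $A_{t}$ by the alternative coupling $\beta=\mathcal{O}(\alpha\epsilon)$, but that proves a theorem with a different parameter regime than the one stated and formalized here, and you would still need the induction to obtain the constants $M$ and the bound on $\textbf{F}(q^{(t)})$ that your estimates rely on.)
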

\noindent
The formal version of Theorem \ref{theorem:convergence_simple} and its detailed proof can be found in Appendix \ref{appendix:proofoftheorem2}. Theorem \ref{theorem:convergence_simple} indicates that when we set $\alpha = \beta = \mathcal{O}(\epsilon^{2})$ and $T\geq \Theta(\epsilon^{-4})$, we can find an $\sqrt{\mathcal{O}(\epsilon^{2})+3\sigma^{2}}$-accurate Pareto stationary point by running MWGraD. Furthermore, we can see that the squared norm of the convex combination of Wasserstein gradients in Theorem \ref{theorem:convergence_simple} is upper bounded by a sum of two terms. The first term $\mathcal{O}(\epsilon^{2})$, which can be arbitrarily small, corresponds to the convergence rate of MWGraD with the exact velocity fields. Meanwhile, the second term $3\sigma^{2}$ exhibits the effect of the gradient error caused by the approximation of velocity fields.

\section{Experimental Results}
In this section, we present numerical experiments on both synthetic and real-world datasets to demonstrate the effectiveness of MWGraD. The code can be found at \url{https://github.com/haidnguyen0909/MWGraD}.
\subsection{Experiments on Synthetic Datasets}

\noindent
\textbf{Energy Functional}. We consider $F_{k}(q)$ as an energy functional (\ref{energyfunctional}), related to sampling from multiple target distributions. Each target distribution is a mixture of two Gaussian distributions
$\pi_{k}(\textbf{x})=\eta_{k1}\mathcal{N}(\textbf{x}|\mu_{k1}, \Sigma_{k1}) + \eta_{k2}\mathcal{N}(\textbf{x}|\mu_{k2}, \Sigma_{k2})$, $k=1,2,3,4$, where $\eta_{k1}=0.7$, $\eta_{k2}=0.3$ for $k=1,2,3,4$, the means $\mu_{11}=[4, -4]^\top$, $\mu_{12}=[0, 0.1]^\top$, $\mu_{21}=[-4, 4]^\top$, $\mu_{22}=[0, -0.1]^\top$, $\mu_{31}=[-4, -4]^\top$, $\mu_{32}=[0.1, 0]^\top$, $\mu_{41}=[4, 4]^\top$, $\mu_{42}=[-0.1, 0]^\top$, and the common covariance matrix $\Sigma_{kj}$ is the identity matrix of size $2\times 2$, for $k=1,2,3,4$ and $j=1,2$. 
The distribution $q$ is represented by 50 particles, initially sampled from the standard distribution. We update the particles using MOO-SVGD \citep{liu2021profiling} and variants of our MWGraD, including MWGraD with SVGD (\ref{approximate-svgd}) (denoted as MWGraD-SVGD) and Blob (\ref{approximate-blob}) (denoted as MWGraD-Blob) methods to approximate the velocities. Figure \ref{fig:toy-1} shows a common high-density region around the origin.

\begin{figure*}[t]
	\centerline{\includegraphics[width=1.0\textwidth]{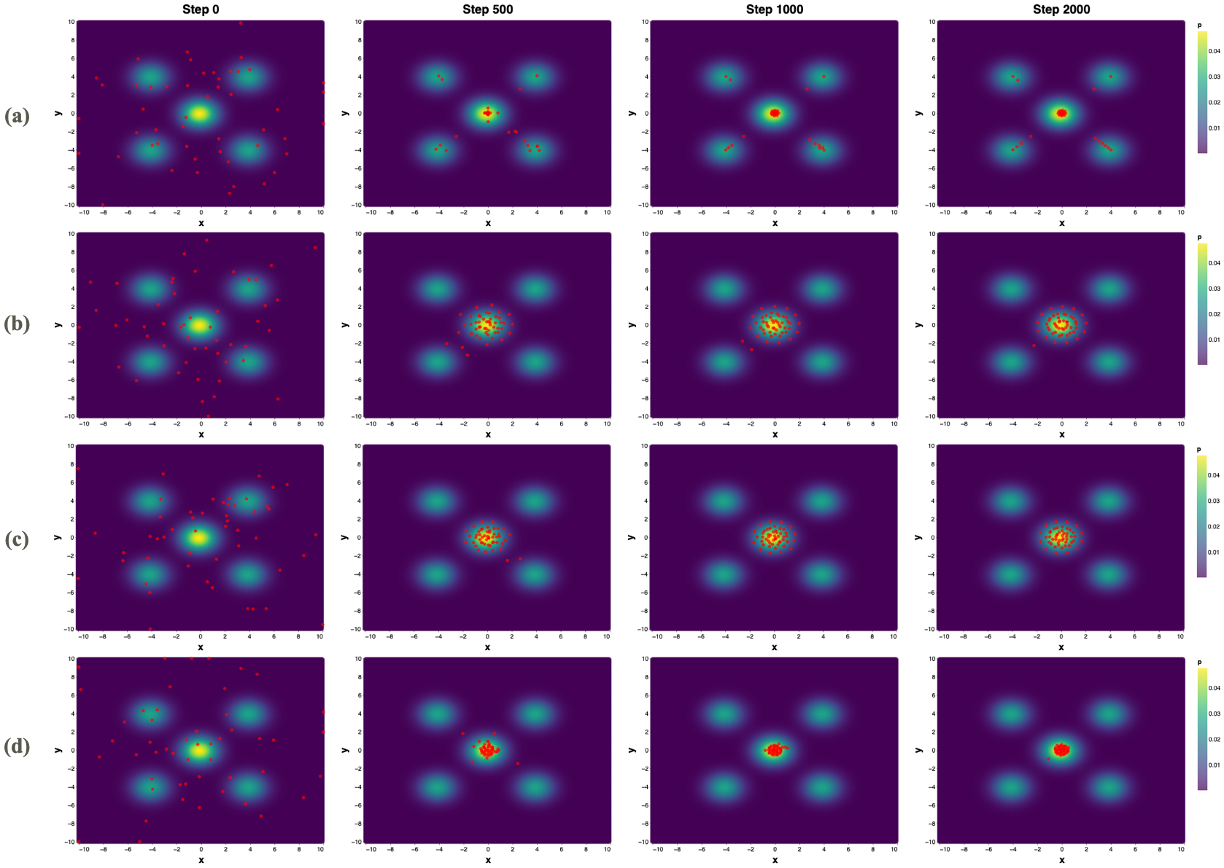}}
	\caption{Sampling from multiple target distributions, where each target is a mixture of two Gaussians. These targets have a joint high-density region around the origin. Initially, 50 particles are sampled from the standard distribution, and then updated using (a) MOO-SVGD and variants of MWGraD, including (b) MWGraD-SVGD, (c) MWGraD-Blob and (d) MWGraD-NN. While MOO-SVGD tends to scatter particles across all the modes, MWGraD tends to move particles towards the joint high-density region.}
	\label{fig:toy-1}
\end{figure*}

We also consider the variational form (\ref{eqn:varform}) of KL divergence for sampling, where $F_{k}$ is expressed as $KL(q, \pi_{k})$. As noted earlier, the variational form of $F_{k}$ can handle cases where $\pi_{k}$ is characterized by samples. We can extend this form to the sampling task by expressing $F_{k}$ as an energy functional by introducing the change of variables $h_{k}^\prime(\textbf{x})=\exp\left\{h_{k}(\textbf{x})\pi_{k}(\textbf{x})/p(\textbf{x})\right\}$, where $p(\textbf{x})$ is a distribution that is easy to sample from, such as $p(\textbf{x})=\mathcal{N}(\textbf{x}|0, \textbf{I})$. Then, the variational form of $KL(q,\pi_{k})$ can be rewritten as
\begin{align}
\label{eqn:changeofvariable}
\begin{split}
&KL(q,\pi_{k})=\\
&\max_{h_{k}^\prime\in\mathcal{H^{+}}}\left\{\mathbb{E}_{\textbf{x}\sim q}\left[\log\left( \frac{h_{k}^\prime (\textbf{x})p(\textbf{x})}{\pi_{k}(\textbf{x})}\right)\right]  
        - \log \mathbb{E}_{\textbf{x}\sim p}[h_{k}^\prime (\textbf{x})]\right\},
\end{split}
\end{align}
where $\mathcal{H}^{+}$ is the space of positive functions. As we can use $\log$ of unnormalized density of $\pi_{k}$ in the above optimization problem, $h_{k}^{\prime}$ can be estimated using samples drawn from $q$ and $p$. We optimize (\ref{eqn:changeofvariable}) by parameterizing $h_{k}^\prime$ with a neural network with two layers, each of which has 50 neurons. We use the ReLU activation function in the last layer to guarantee the output  of the neural network to be positive. 
Furthermore, $h_{k}$ can be estimated from $h_{k}^{\prime}$ using: $h_{k}(\textbf{x})=\log h_{k}^{\prime}(\textbf{x})+\log p(\textbf{x})-\log \pi(\textbf{x})$. 
The parameters of the neural network are trained by running 20 steps of gradient ascent to optimize the objective (\ref{eqn:changeofvariable}). After training, the estimate $h_{k}^{*}$ can be used to approximate the first variation of $F_{k}(q)$, i.e., $\Tilde{\textbf{v}}^{(t)}_{k}=\nabla  h^{*}_{k}$. We denote this method by MWGraD-NN as we use the neural network to approximate the velocities. For all compared methods, we set the step size $\alpha=0.0001$, while for variants of MWGraD, we set the step size $\beta = 0.001$. For MWGraD-SVGD and MWGraD-Blob, we use RBF as kernel $K$, i.e. $K(\textbf{x},\textbf{y})= \exp\left\{-\gamma\lVert \textbf{x}-\textbf{y}\rVert^{2}_{2}\right\}$, where we fix $\gamma$ as 0.01. We run 2000 updates of particles.

\begin{table*}[t]
  \centering
  \begin{tabular}{l l l l l l l l}
  \hline
    \multirow{2}{*}{Datasets} & \textbf{Tasks} & MGDA & MOO-SVGD & MT-SGD & MWGraD & \textbf{MWGraD} & MWGraD \\
    & & & & & -SVGD & -Blob & -NN\\
    \hline
    \multirow{2}{*}{Multi-Fashion+MNIST} &  \#1 & 94.4$\pm$0.6 & 94.8$\pm0.4$ & 96.2$\pm$0.3 & 95.7$\pm$0.4 & \textbf{96.7}$\pm$0.5 & 95.9$\pm$0.4\\ 
    &  \#2 & 85.5$\pm$0.5 & 85.6$\pm$0.2 & 87.8$\pm$0.6 & 88.9$\pm$0.6 & \textbf{92.5}$\pm$0.4 & 88.2$\pm$0.3\\
    \hline
    \multirow{2}{*}{Multi-MNIST} &  \#1 & 93.4$\pm$0.4 & 93.1$\pm$0.3 & 94.4$\pm$0.5 & 94.5$\pm$0.4 & \textbf{97.6}$\pm$0.2 & \textbf{97.7}$\pm$0.5\\ 
    &  \#2 & 91.8$\pm$0.6 & 91.2$\pm$0.2 & 92.9$\pm$0.5 & 93.2$\pm$0.6 & \textbf{96.7}$\pm$0.5 & 95.5$\pm$0.4\\ 
    \hline
    \multirow{2}{*}{Multi-Fashion} &  \#1 & 84.1$\pm$0.8 & 83.8$\pm$0.8 & 84.9$\pm$0.6 & 85.1$\pm$0.7 & 86.8$\pm$0.3 & \textbf{87.2}$\pm$0.4\\ 
    &  \#2 & 83.3$\pm$0.4 & 83.1$\pm$0.3 & 84.6$\pm$0.5 & 84.3$\pm$0.4 & \textbf{87.2}$\pm$0.5 & 85.3$\pm$0.6\\ 
  \end{tabular}
  \caption{Experimental results on Multi-Fashion+MNIST, Multi-MNIST, and Multi-Fashion. We report the ensemble accuracy (higher is better) averaged over three independent runs with different initializations.}
  \label{tab:results}
\end{table*}

Figure \ref{fig:toy-1} shows the particles updated by MOO-SVGD and variants of MWGraD, including MWGraD-SVGD, MWGraD-Blob, and MWGraD-NN at selected iterations. We observe that particles by MOO-SVGD spread out and tend to cover all the modes of the targets, some of them even scatter along the trajectory due to the conflict in optimizing multiple objectives. In contrast, variants of MWGraD tend to cover the common high-density region with particles. \\

\noindent
\textbf{Dissimilarity Functions}. Next we consider $F_{k}(q)=D(q,\pi_{k})$, where $D$ is the KL divergence or JS divergence, and $\pi_{k}$ is characterized by a set of samples. We use the same target distribution used in the case of energy functional but we generate a set of samples to represent the target distribution instead of its energy functional. In our experiments, we generate 30 samples for each target distribution, so we have 120 samples for all objectives. For this case, MOO-SVGD, MWGraD-SVGD and MWGraD-Blob cannot be used. Thus we need to use the variational form of the KL and JS divergences. The detailed experimental settings are described in Appendix \ref{exp:settings}. Figure \ref{fig:toy-2} shows the particles updated by MWGraD at selected iterations, for two cases of KL divergence and JS divergence. We observe that MWGraD tend to cover the common high-density regions with particles, which is consistent with our observation in cases where $F_{k}(q)$ is represented by an energy functional.

\subsection{Experiments on Multi-task learning}
We follow the experimental settings outlined in \citep{phan2022stochastic} to verify the performance of MWGraD.\\

\noindent
\textbf{Multi-task Learning}. We assume to have $K$ prediction tasks and a training dataset $\mathbb{D}$. For each task $k\in[K]$, the model is represented by the parameter vector $\theta^{k}=[\textbf{x}, \textbf{z}^{k}]$, where $\textbf{x}$ is the shared part of the model, and $\textbf{z}^{k}$ is the task-specific non-shared part. The approach outlined in \citep{phan2022stochastic} is as follows.
We maintain a set of $m$ models $\theta_{i}=[\theta^{k}_{i}]_{k=1}^{K}$, where $i\in[m]$, and $\theta^{k}_{i}=[\textbf{x}_{i},\textbf{z}_{i}^{k}]$. At each iteration, given the non-shared parts $\textbf{z}_{i}^{k}$ for $i\in[m],k\in[K]$, we sample the shared part from the multiple target distributions $p(\textbf{x}|\textbf{z}^{k}, \mathbb{D})$, $k\in [K]$. Here we apply MWGraD to sample the shared parts $[\textbf{x}_{i}]_{i=1}^{m}$ from the multiple target distributions. Then, given the shared parts $[\textbf{x}_{i}]_{i=1}^{m}$, for each task $k$, we update the corresponding non-shared parts $[\textbf{z}_{i}^{k}]_{i=1}^{m}$ by sampling from the posterior distribution $p(\textbf{z}^{k}|\textbf{x}, \mathbb{D})$. This process corresponds to Bayesian sampling, for which we can use techniques, including SVGD, Blob methods or a neural network, as explained in the previous section. In our experiments, we use SVGD to sample the non-shared parts, while exploring different methods for sampling the shared parts, including MOO-SVGD, MT-SGD, MWGraD-SVGD, MWGraD-Blob and MWGraD-NN.\\

\noindent
\textbf{Datasets and Evaluation Metric}. We validate the methods on three benchmark datasets: Multi-Fashion-MNIST \citep{sabour2017dynamic}, Multi-MNIST \citep{phan2022stochastic}, and Multi-Fashion \citep{phan2022stochastic}. Each of them consists of 120,000 training and 20,000 testing images from MNIST and FashionMNIST \citep{xiao2017fashion} by overlaying an image on top of another. We compare our methods with the following baselinese: MOO-SVGD, MT-SGVD and MGDA, which achieved the best performance \citep{phan2022stochastic}. For the variants of MWGraD and MT-SGD, the reported results are from the ensemble prediction of five particle models. For MGDA, we train five particle models independently with different initializations and then ensemble these models. For evaluation metric, we compare the methods in terms of the accuracy for each task.\\

\noindent
\textbf{Results}. Table \ref{tab:results} shows the ensemble accuracy of compared methods across three datasets. We observe that variants of MWGraD consistently outperform the other methods. For example, MWGraD-Blob achieves the best performance of 96.71\%, 97.64\% for Task 1 on the Multi-Fashion+MNIST and Multi-MNIST, resepectively, while MWGraD-NN achieves the best performance of 87.2\% on the third dataset. For Task 2, MWGraD-Blob again achieves the best performance across all datasets, with average accuracies of 92.49\%, 96.69\% and 87.2\%, respectively. Additionally, MWGraD-SVGD shows comparable performance to MT-SGD on all datasets. The reason is that both methods use the same velocity approximation, with the main difference being in the update of the weights $\textbf{w}$: while MT-SGD solves for the optimal solution of $\textbf{w}$ in Theorem \ref{theorem1}, MWGraD-SVGD approximates the optimal solution by performing a gradient update (\ref{eqn:updatew}). These experimental results clearly demonstrate the effectiveness of our proposed methods for the application of multi-task learning. See more details on the experiments in Appendix \ref{exp:realworlddatasets}.

\section{Conclusion}
In this paper, we have addressed the  MODO problem, where the goal is to simultaneously minimize multiple functionals of probability distributions. We have introduced MWGraD, an iterative particle-based algorithm for solving MODO. At each iteration, it estimates the Wasserstein gradient for each objective using SGVD, Blob methods and neural networks. It then aggregates these gradients into one single Wasserstein gradient, which guides the updates of each particle. We have provided theoretical analyses and presented experiments on both synthetic and real-world datasets, demonstrating the effectiveness of MWGraD in identifying the joint high-density regions of objectives. In future work, we plan to explore MWGraD for real-world applications.

\subsubsection*{Acknowledgements}
This work was supported in part by the International Collaborative Research Program of Institute for Chemical Research, Kyoto University (grant \#2025-29), MEXT KAKENHI [grant number: 23K16939] (to D.H.N.), MEXT KAKENHI [grant numbers: 21H05027, 22H03645, 25H01144] (to H.M.), and JSPS KAKENHI [grant number: JP24H00685] (to A. N.).

\bibliographystyle{plainnat}
\bibliography{mlj}  






\onecolumn
\appendix
\title{Wasserstein Gradient Flow over Variational Parameter Space for Variational
Inference\\(Supplementary Material)}

\section{Proof of Theorem \ref{theorem1}}
\label{appendix:proofoftheorem1}
\begin{proof}
Problem (\ref{eqn:3}) can be equivalently rewritten as the following optimization problem
\begin{align}
\label{reformulatedproblem}
\begin{split}
    \min_{\textbf{v},\mu} - \mu + \frac{1}{2}\int_{\mathcal{X}}\langle \textbf{v}(\textbf{x}), \textbf{v}(\textbf{x})\rangle q^{(t)}(\textbf{x})d\textbf{x}\\
    \text{such that } \mu \leq \int_{\mathcal{X}}\langle \textbf{v}^{(t)}_{k}(\textbf{x}),\textbf{v}(\textbf{x}) \rangle q^{(t)}(\textbf{x})d\textbf{x}, \text{ for } k\in[K].
\end{split}
\end{align}
The Lagrange function is defined as follows
\begin{align*}
    \mathcal{L}(\textbf{v}, \mu, \textbf{w}) = - \mu + \frac{1}{2}\int_{\mathcal{X}}\langle \textbf{v}(\textbf{x}), \textbf{v}(\textbf{x})\rangle q^{(t)}(\textbf{x})d\textbf{x} + \sum_{k=1}^{K}w_{k} \left(\mu- \int_{\mathcal{X}}\langle \textbf{v}^{(t)}_{k}(\textbf{x}),\textbf{v}(\textbf{x}) \rangle q^{(t)}(\textbf{x})d\textbf{x}  \right),
\end{align*}
where $\textbf{w}=[w_1, w_2,...,w_K]^\top\in\mathcal{W}$ are the dual variables corresponding to $K$ constraints in (\ref{reformulatedproblem}). Taking the derivative of $\mathcal{L}$ with respect to $\textbf{v}$ and setting it to zero, we obtain the optimal solution $\textbf{v}^{*}$ as follows. For $\textbf{x}\in\mathcal{X}$, we have that
\begin{align}
\label{solutionx}
    \textbf{v}^{*}(\textbf{x}) = \sum_{k=1}^{K}w_{k}\textbf{v}^{(t)}_{k}(\textbf{x}).
\end{align}
Substituting it to (\ref{reformulatedproblem}), we have the following optimization problem for $\textbf{w}$
\begin{equation*}
    \label{problem:solveweights}
        \textbf{w}^{*} = \argmin_{\textbf{w}\in\mathcal{W}}\frac{1}{2}\int_{\mathcal{X}}\lVert \sum_{k=1}^{K}w_{k}\textbf{v}^{(t)}_{k}(\textbf{x})  \rVert^{2}_{2}q^{(t)}(\textbf{x})d\textbf{x},
\end{equation*}
which completes the proof.
\end{proof}

\section{Algorithm}
\label{appendix:algorithm}
\begin{algorithm}[H]
\label{alg:mwgrad}
\SetAlgoLined
\KwIn{Functionals $\left\{F_{k}\right\}$, number of particles $m$, number of iterations $T$, step sizes $\alpha>0, \beta > 0$, weights $\textbf{w}^{(0)}$.}
\KwOut{a set of $m$ particles $\left\{\textbf{x}^{(T)}_{i} \right\}_{i=1}^{m}$.}
Sample $m$ initial particles $\left\{\textbf{x}^{(0)}_{i} \right\}_{i=1}^{m}$ from $\mathcal{N}(0, \textbf{I}_{d})$.\\
$t\leftarrow 0$\\
\While{$t< T$}{
     Estimate $\Tilde{\textbf{v}}^{(t)}_{k}(\textbf{x}^{(t)}_{i})$ by (\ref{approximate-svgd}) or (\ref{approximate-blob}) or (\ref{approximate-nn}), for $i\in[m]$, $k\in [K]$

     Compute $\Tilde{\textbf{v}}^{(t)}(\textbf{x}^{(t)}_{i})\leftarrow\sum_{k=1}^{K}w^{(t)}_{k}\Tilde{\textbf{v}}^{(t)}_{k}(\textbf{x}^{(t)}_{i})$, for $i\in [m]$

     Update $\textbf{x}^{(t+1)}_{i}\leftarrow \textbf{x}^{(t)}_{i} - \alpha \Tilde{\textbf{v}}^{(t)}(\textbf{x}^{(t)}_{i})$, for $i\in[m]$

     Update $\textbf{w}^{(t)}$ by (\ref{eqn:updatew})
     
    $t\leftarrow t+1$
    }
 \caption{Multi-objective Wasserstein Gradient Descent (MWGraD)}
\end{algorithm}

\section{A Detailed Discussion on the Pareto Stationary Distribution}
\label{appendix:pareto}
In this section, we examine the conditions under which a Pareto stationary distribution is also a Pareto optimal distribution. We begin with the following claim.

\textbf{Claim 1} (Pareto optimality $\rightarrow$ Pareto stationarity).  
If a distribution $q$ is not Pareto stationary, then there exists a descent direction that simultaneously improves all objective functions.

\textit{Proof}.  
Suppose that at a point $q \in \mathcal{P}_2(\mathcal{X})$, there exists no convex combination of the Wasserstein gradients that sums to zero. That is, for all $\mathbf{w} \in \mathbb{R}^K$ with $\sum_{k=1}^{K} w_k = 1$, we have
\[
\texttt{grad} \mathbf{F}(q)\, \mathbf{w} = \sum_{k=1}^{K} w_k \, \texttt{grad} F_k(q) \neq 0.
\]
This implies that the gradients $\texttt{grad} F_k(q)$ for $k \in [K]$ all lie within a common open half-space. Therefore, there exists a tangent vector $s$ lying in the opposite half-space such that
\[
\langle s, \texttt{grad} F_k(q) \rangle_{q} < 0 \quad \text{for all } k \in [K].
\]
This means that moving $q$ along the direction $s$ results in a simultaneous decrease of all objectives $F_k$. Hence, $q$ cannot be a Pareto optimal distribution.

From the above argument, we conclude that Pareto optimality implies Pareto stationarity; in other words, Pareto optimality is a stronger condition.

\textbf{Claim 2} (Pareto stationarity + geodesic strict convexity $\rightarrow$ Pareto optimality).  
If all objectives are (geodesically) strictly convex, and $q$ is a Pareto stationary distribution, then $q$ is a Pareto optimal distribution.

\textit{Proof}.  
Assume $q$ is a Pareto stationary distribution. Now suppose, for contradiction, that $q$ is not Pareto optimal. Then there exists another distribution $q' \neq q$ such that
\[
F_k(q') \leq F_k(q) \quad \text{for all } k \in [K], \quad \text{and} \quad F_j(q') < F_j(q) \text{ for some } j.
\]
Let $\gamma:[0,1]\rightarrow \mathcal{P}_2(\mathcal{X})$ be a curve that connects $q$ and $q'$ and satisfies $\gamma(0)=q$ and $\gamma(1)=q'$. Let $s$ be a tangent vector at $q$ and $s$ satisfies $s=\gamma'(0)$. As $q$ is a Pareto stationary distribution, for any direction $s$ in the tangent space at $q$, there is at least one objective functional $F_k$ such that
\[
\langle s, \texttt{grad} F_k(q) \rangle_q \geq 0.
\]

Since $F_k$ is strictly convex along the curve $\gamma$, we have
\[
F_k(q') > F_k(q) + \langle \texttt{grad} F_k(q), s \rangle_q.
\]
But $F_k(q') \leq F_k(q)$, which implies $\langle \texttt{grad} F_k(q), s \rangle_q < 0$. This contradicts the stationarity of $q$. Therefore, $q$ must be Pareto optimal.

\section{Formal version of Theorem \ref{theorem:convergence_simple} and its proof}
\label{appendix:proofoftheorem2}
We first present the following lemma, which is useful to prove Theorem \ref{theorem:convergence_simple}.
\begin{lem}
\label{lemma1}
    Assume that a functional $G$ is geodesically $L$-smooth and $G(q) - G^{*}\leq C$ for $q\in \mathcal{P}_{2}(\mathcal{X})$, where $G^{*}=\argmin_{p\in \mathcal{P}_{2}(\mathcal{X})} G(p)$, and $C$ is a constant. Then, there exists a constant $M$ such that
    \begin{equation}
        \langle \texttt{grad} G(q), \texttt{grad} G(q)\rangle_{q}\leq M^{2}
    \end{equation}
\end{lem}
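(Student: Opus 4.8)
The plan is to transport the classical Euclidean estimate $\lVert \nabla f(x)\rVert^2 \leq 2L\left(f(x) - f^*\right)$, which holds for any $L$-smooth function, onto the Wasserstein manifold by combining the geodesic smoothness hypothesis with one step along the negative Wasserstein gradient via the Riemannian exponential map. First I would fix $q\in\mathcal{P}_2(\mathcal{X})$ and define the candidate comparison point as a single gradient step of size $1/L$:
\begin{equation*}
    q' = \texttt{Exp}_{q}\left(-\tfrac{1}{L}\,\texttt{grad}\,G(q)\right).
\end{equation*}
By construction the inverse exponential map recovers the tangent vector we moved along, $\texttt{Exp}_{q}^{-1}(q') = -\tfrac{1}{L}\,\texttt{grad}\,G(q)$, and since a constant-speed geodesic issued from $q$ has length equal to the norm of its initial velocity, we obtain $\mathcal{W}_2^2(q,q') = \tfrac{1}{L^2}\langle \texttt{grad}\,G(q), \texttt{grad}\,G(q)\rangle_{q}$.

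Next I would apply geodesic $L$-smoothness (the single-functional analogue of Assumption 2) with base point $p=q$ evaluated at $q'$:
\begin{equation*}
    G(q') \leq G(q) + \langle \texttt{grad}\,G(q), \texttt{Exp}_{q}^{-1}(q')\rangle_{q} + \frac{L}{2}\,\mathcal{W}_2^2(q,q').
\end{equation*}
Substituting the two identities from the previous step, the linear term contributes $-\tfrac{1}{L}\langle \texttt{grad}\,G(q), \texttt{grad}\,G(q)\rangle_{q}$ and the quadratic term contributes $+\tfrac{1}{2L}\langle \texttt{grad}\,G(q), \texttt{grad}\,G(q)\rangle_{q}$, which collapse to the descent-type inequality
\begin{equation*}
    G(q') \leq G(q) - \frac{1}{2L}\langle \texttt{grad}\,G(q), \texttt{grad}\,G(q)\rangle_{q}.
\end{equation*}
Finally, since $G^{*} = \min_{p}G(p) \leq G(q')$, rearranging and invoking the hypothesis $G(q)-G^{*}\leq C$ gives
\begin{equation*}
    \frac{1}{2L}\langle \texttt{grad}\,G(q), \texttt{grad}\,G(q)\rangle_{q} \leq G(q) - G(q') \leq G(q) - G^{*} \leq C,
\end{equation*}
so the claim holds with $M^2 = 2LC$.

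The bulk of the argument is an algebraic rearrangement, so the main obstacle is the rigorous justification of the two Riemannian identities used in the first step. Specifically, I must ensure that $\texttt{Exp}_{q}$ is well-defined on the tangent vector $-\tfrac{1}{L}\,\texttt{grad}\,G(q)$ so that $q'$ is a genuine element of $\mathcal{P}_2(\mathcal{X})$ (otherwise the bound $G(q')\geq G^{*}$ is not licensed), and that the squared $2$-Wasserstein distance $\mathcal{W}_2^2(q,q')$ indeed equals the squared tangent-space norm $\langle \texttt{Exp}_{q}^{-1}(q'), \texttt{Exp}_{q}^{-1}(q')\rangle_{q}$. Both rely on the formal Riemannian structure of $(\mathcal{P}_2(\mathcal{X}), \mathcal{W}_2)$ and the correspondence between constant-speed geodesics and the exponential map, which I would cite from \citep{villani2021topics, santambrogio2015optimal}; everything downstream is elementary.
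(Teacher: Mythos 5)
Your proposal is correct and matches the paper's own proof essentially step for step: both take $q' = \texttt{Exp}_{q}\left(-\tfrac{1}{L}\,\texttt{grad}\,G(q)\right)$, apply geodesic $L$-smoothness, and rearrange to get $\langle \texttt{grad}\,G(q), \texttt{grad}\,G(q)\rangle_{q} \leq 2L\left(G(q)-G^{*}\right) \leq 2LC = M^{2}$. The only cosmetic difference is that you write the quadratic term as $\tfrac{L}{2}\mathcal{W}_2^2(q,q')$ and then identify it with the tangent-space norm, whereas the paper works with the tangent-norm form directly; your added remarks on justifying the exponential-map identities are a reasonable supplement but do not change the argument.
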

\begin{proof}
    Since $G$ is geodesically $L$-smooth, we have for both $q,q^\prime\in \mathcal{P}_{2}(\mathcal{X})$ that
    \begin{align}
        G(q^\prime) \leq G(q) + \langle \texttt{grad} G(q), \texttt{Exp}^{-1}_{q}(q^\prime)\rangle_{q} + \frac{L}{2}\langle\texttt{Exp}^{-1}_{q}(q^\prime), \texttt{Exp}^{-1}_{q}(q^\prime)\rangle_{q}
    \end{align}
    By choosing $q^\prime = \texttt{Exp}_{q}(-\frac{1}{L} \texttt{grad} G(q))$, we have
    \begin{align}
        G(q^\prime) \leq G(q) - \frac{1}{L} \langle\texttt{grad} G(q), \texttt{grad} G(q)\rangle_{q} + \frac{1}{2L}\langle\texttt{grad} G(q), \texttt{grad} G(q)\rangle_{q}
    \end{align}
    So, we have
    \begin{align}
        \langle\texttt{grad} G(q), \texttt{grad} G(q)\rangle_{q}\leq 2L \left( G(q)- G(q^\prime) \right) \leq 2L \left( G(q)- G^{*} \right)\leq 2LC
    \end{align}
    By setting $M^{2}=2LC$, we conclude that
    \begin{equation}
        \langle\texttt{grad} G(q), \texttt{grad} G(q)\rangle_{q} \leq M^{2}
    \end{equation}
\end{proof}
\noindent
Let $b_{1}>0$, $b_{2}>0$, $b_{3}>0$ and $C>0$ be some constants such that
\begin{align*}
    \Delta + b_{1} + b_{2} + b_{3} \leq C.
\end{align*}
Define $L = \max_{k\in[K]}\left\{\ell_{k} \right\}$ and $M= 2LC$. Then we have the following convergence theorem for Algorithm \ref{alg:mwgrad}.

\begin{thm}
\label{theorem:convergence_detailed}
    Let Assumptions 1 and 2 hold. Set $\alpha, \beta, T$ as follows:
    \begin{align*}
        \beta &\leq \frac{\epsilon^{2}}{24(K M^{4} + 2 K M^{2}\sigma^{2} + K \sigma^{4})}\\
        \alpha &\leq \min \left\{\frac{4 b_{1}}{3 T \sigma^{2}}, \beta b_{2}, \frac{b_{3}}{2\beta T (K M^{4} + 2 K M \sigma^{2} + K \sigma^{4})}\right\}\\
        T&\geq \max \left\{ \frac{12\Delta}{\alpha \epsilon^{2}}, \frac{12}{\beta \epsilon^{2}} \right\}.
    \end{align*}
    We then have that
    \begin{align}
    \label{eqn:graddientnorm}
        \frac{1}{T}\sum_{t=0}^{T-1}\langle \texttt{grad} \textbf{F}(q^{(t)})\textbf{w}^{(t)}, \texttt{grad} \textbf{F}(q^{(t)})\textbf{w}^{(t)} \rangle_{q^{(t)}} \leq \epsilon^{2} + 3\sigma^{2}.
    \end{align}
\end{thm}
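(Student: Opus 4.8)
The plan is to run a Lyapunov argument on the weighted potential $\phi^{(t)} = \sum_{k=1}^{K} w_k^{(t)}\bigl(F_k(q^{(t)}) - F_k^*\bigr)$, where $F_k^* = \inf_p F_k(p)$ and $\Delta = \phi^{(0)}$, and to telescope a fixed multiple of the tracked quantity against the drop of $\phi^{(t)}$. The first step is to record, via the continuity equation (\ref{eqn:grad}) and the Riemannian metric, that $\langle\texttt{grad}\textbf{F}(q^{(t)})\textbf{w}^{(t)},\texttt{grad}\textbf{F}(q^{(t)})\textbf{w}^{(t)}\rangle_{q^{(t)}} = \lVert\textbf{v}^{(t)}\rVert_{q^{(t)}}^2$ with $\textbf{v}^{(t)} = \sum_k w_k^{(t)}\textbf{v}_k^{(t)}$, and that the particle update is exactly $q^{(t+1)} = \texttt{Exp}_{q^{(t)}}(-\alpha\Tilde{\textbf{v}}^{(t)})$ with $\Tilde{\textbf{v}}^{(t)} = \sum_k w_k^{(t)}\Tilde{\textbf{v}}_k^{(t)}$.

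Next I would derive a one-step descent. Applying the geodesic $\ell_k$-smoothness of Assumption 2 along the geodesic from $q^{(t)}$ to $q^{(t+1)}$, with $\texttt{Exp}_{q^{(t)}}^{-1}(q^{(t+1)}) = -\alpha\Tilde{\textbf{v}}^{(t)}$ and $\mathcal{W}_2^2(q^{(t)},q^{(t+1)}) = \alpha^2\lVert\Tilde{\textbf{v}}^{(t)}\rVert_{q^{(t)}}^2$, gives $F_k(q^{(t+1)}) \le F_k(q^{(t)}) - \alpha\langle\textbf{v}_k^{(t)},\Tilde{\textbf{v}}^{(t)}\rangle_{q^{(t)}} + \tfrac{\ell_k\alpha^2}{2}\lVert\Tilde{\textbf{v}}^{(t)}\rVert_{q^{(t)}}^2$. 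Weighting by $w_k^{(t)}$ and summing over $k$ turns $\sum_k w_k^{(t)}\textbf{v}_k^{(t)}$ into $\textbf{v}^{(t)}$ and $\sum_k w_k^{(t)}\ell_k$ into at most $L$. To absorb the approximation error, I note that $\Tilde{\textbf{v}}^{(t)}-\textbf{v}^{(t)} = \sum_k w_k^{(t)}(\Tilde{\textbf{v}}_k^{(t)}-\textbf{v}_k^{(t)})$ with $\textbf{w}^{(t)}$ in the simplex, so convexity of the squared norm and Assumption 1 yield $\lVert\Tilde{\textbf{v}}^{(t)}-\textbf{v}^{(t)}\rVert_{q^{(t)}}^2 \le \sum_k w_k^{(t)}\epsilon_k^{(t)} \le \sigma^2$. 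Splitting $\langle\textbf{v}^{(t)},\Tilde{\textbf{v}}^{(t)}\rangle = \lVert\textbf{v}^{(t)}\rVert^2 + \langle\textbf{v}^{(t)},\Tilde{\textbf{v}}^{(t)}-\textbf{v}^{(t)}\rangle$, bounding the cross term by Young's inequality, and using $\lVert\Tilde{\textbf{v}}^{(t)}\rVert^2 \le 2\lVert\textbf{v}^{(t)}\rVert^2 + 2\sigma^2$, I would get (once $L\alpha \le \tfrac14$) the fixed-weight descent $\sum_k w_k^{(t)}F_k(q^{(t+1)}) \le \sum_k w_k^{(t)}F_k(q^{(t)}) - \tfrac{\alpha}{4}\lVert\textbf{v}^{(t)}\rVert_{q^{(t)}}^2 + \tfrac{\alpha}{2}\sigma^2 + L\alpha^2\sigma^2$.

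The delicate point, and the main obstacle, is that $\phi^{(t)}$ is evaluated at the changing weights, so $\phi^{(t+1)}-\phi^{(t)}$ carries an extra drift term $\langle\textbf{w}^{(t+1)}-\textbf{w}^{(t)},\,\textbf{F}(q^{(t+1)})-\textbf{F}^*\rangle$ coming from the weight update (\ref{eqn:updatew}). To control it I would first bound the per-step move: non-expansiveness of the simplex projection $\Pi_{\mathcal{W}}$ gives $\lVert\textbf{w}^{(t+1)}-\textbf{w}^{(t)}\rVert \le \beta\lVert\Tilde{\textbf{G}}^{(t)}\textbf{w}^{(t)}\rVert$, where $\Tilde{\textbf{G}}^{(t)}$ is the Gram matrix appearing in (\ref{eqn:updatew}); Lemma \ref{lemma1} then gives the uniform bound $\lVert\textbf{v}_k^{(t)}\rVert_{q^{(t)}} \le M$, hence $\lVert\Tilde{\textbf{v}}_k^{(t)}\rVert_{q^{(t)}} \le M+\sigma$ and $\lVert\Tilde{\textbf{G}}^{(t)}\textbf{w}^{(t)}\rVert \le \sqrt{K}(M+\sigma)^2$, while the same boundedness keeps $F_k(q^{(t)})-F_k^*$ uniformly bounded. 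The genuinely hard step is to stop the accumulated drift from scaling like $\beta T$: here I would split it as $\langle\textbf{w}^{(t+1)}-\textbf{w}^{(t)},\textbf{F}(q^{(t)})-\textbf{F}^*\rangle + \langle\textbf{w}^{(t+1)}-\textbf{w}^{(t)},\textbf{F}(q^{(t+1)})-\textbf{F}(q^{(t)})\rangle$, observe that the second inner product is of order $\alpha\beta\,K(M^2+\sigma^2)^2$ per step (both factors small) and accumulates into the budget $b_3$ through the constraint $2\alpha\beta T K(M^2+\sigma^2)^2 \le b_3$, and couple the first inner product to the descent and to the weight-update progress using the two-timescale separation $\alpha \le \beta b_2$. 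This coupling is where I expect essentially all of the real work to lie.

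Finally I would telescope over $t=0,\dots,T-1$. Using $\phi^{(0)}=\Delta$, $\phi^{(T)}\ge 0$, the accumulated noise $\tfrac{\alpha}{2}T\sigma^2$ kept below $b_1$ via $\alpha \le \tfrac{4b_1}{3T\sigma^2}$, and the drift budget $b_3$, then dividing by $\tfrac{\alpha T}{4}$, the initial-gap term becomes $\tfrac{4\Delta}{\alpha T} \le \tfrac{\epsilon^2}{3}$ under $T \ge \tfrac{12\Delta}{\alpha\epsilon^2}$, the $\beta$-controlled terms become $\le \tfrac{\epsilon^2}{3}$ under $T \ge \tfrac{12}{\beta\epsilon^2}$, and the leading noise term produces the multiple of $\sigma^2$; collecting $\Delta + b_1 + b_2 + b_3 \le C$ yields $\tfrac1T\sum_t\lVert\textbf{v}^{(t)}\rVert_{q^{(t)}}^2 \le \epsilon^2 + 3\sigma^2$. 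Since the minimum over $t$ is at most the average, the stated bound follows.
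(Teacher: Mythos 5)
Your overall architecture (one-step geodesic descent, non-expansiveness of the simplex projection, telescoping, dividing by $\alpha T/4$) matches the paper's, but your choice of Lyapunov function creates a gap that the paper's proof is specifically built to avoid. You track $\phi^{(t)}=\sum_{k}w_{k}^{(t)}\bigl(F_{k}(q^{(t)})-F_{k}^{*}\bigr)$ with the \emph{time-varying} weights, which produces the per-step drift $\langle \textbf{w}^{(t+1)}-\textbf{w}^{(t)},\,\textbf{F}(q^{(t)})-\textbf{F}^{*}\rangle$. Your own estimate for it, of order $\beta\sqrt{K}(M+\sigma)^{2}C$ per step, accumulates to order $\beta T$ over the horizon; since the theorem forces $\beta T\geq 12/\epsilon^{2}$, after dividing by $\alpha T/4$ this contributes a term of order $(\beta/\alpha)\cdot C\sqrt{K}(M+\sigma)^{2}$, and the two-timescale condition $\alpha\leq \beta b_{2}$ gives $\beta/\alpha\geq 1/b_{2}$ --- a constant that does not vanish as $\epsilon\to 0$, so the claimed bound $\epsilon^{2}+3\sigma^{2}$ cannot be reached this way. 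You acknowledge exactly this obstacle and propose to ``couple the first inner product to the descent and to the weight-update progress,'' but no mechanism is supplied, and none is apparent: the update (\ref{eqn:updatew}) is projected gradient descent on $\textbf{w}\mapsto\frac{1}{2}\lVert\Tilde{\textbf{V}}^{(t)}\textbf{w}\rVert^{2}_{q^{(t)}}$, not on the potential, so the drift has no favorable sign. The hard step is deferred, not solved.

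The paper's proof sidesteps the drift entirely with a fixed-comparator, regret-style argument: the descent inequality is written for $\textbf{F}(q^{(\tau+1)})\textbf{w}$ with an \emph{arbitrary fixed} $\textbf{w}\in\mathcal{W}$ (your fixed-weight descent is the special case $\textbf{w}=\textbf{w}^{(\tau)}$), at the price of the cross term $\alpha\langle \texttt{grad}\,\textbf{F}(q^{(\tau)})(\textbf{w}^{(\tau)}-\textbf{w}),(\texttt{grad}\,\textbf{F}(q^{(\tau)})+\xi^{(\tau)})\textbf{w}^{(\tau)}\rangle_{q^{(\tau)}}$. Precisely this cross term is what non-expansiveness of $\Pi_{\mathcal{W}}$ controls: it is bounded by $\frac{1}{2\beta}\bigl(\lVert\textbf{w}^{(\tau)}-\textbf{w}\rVert_{2}^{2}-\lVert\textbf{w}^{(\tau+1)}-\textbf{w}\rVert_{2}^{2}\bigr)$ plus $\mathcal{O}(\beta)$ error terms, and because $\textbf{w}$ is constant the quadratic terms telescope to at most $\mathcal{O}(1/\beta)$ over the whole horizon (the $b_{2}$ budget), instead of growing like $\beta T$. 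Two further repairs would be needed even after switching comparators: (i) invoking Lemma \ref{lemma1} to get $\lVert\textbf{v}_{k}^{(t)}\rVert_{q^{(t)}}\leq M$ requires $F_{k}(q^{(t)})-F_{k}^{*}\leq C$ at \emph{every} iterate, which the paper establishes by induction on $\tau$, with the budget $\Delta+b_{1}+b_{2}+b_{3}\leq C$ as the induction invariant, whereas you assume this boundedness outright; (ii) your splitting of the drift into two inner products becomes unnecessary once the comparator is fixed, since the potential is then linear in a constant $\textbf{w}$.
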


\begin{proof}
As shown in Lemma \ref{lemma1}, a bounded functional value implies a bounded Wasserstein gradient norm. In the following, we show that, with the properly selected parameters in Theorem \ref{theorem:convergence_detailed}, the functional value is bounded by induction. 

For the base case, it is trivial to see that $F_{k}(q^{(0)}) - F_{k}^{*}\leq \Delta \leq C$ for $k\in [K]$. Now we assume that for any $k\in[K]$ and $t\leq \tau< T$, we have that $F_{k}(q^{(t)}) - F_{k}^{*} \leq C$ holds. We then prove that $F_{k}(q^{(\tau+1)}) - F_{k}^{*}\leq  C$ holds for any $k\in[K]$.

For any $k\in [K]$, $t\leq \tau$, we have $F_{k}(q^{(t)})-F_{k}^{*}\leq C$, which implies that $\langle \texttt{grad} F_{k}(q^{(t)}), \texttt{grad} F_{k}(q^{(t)})\rangle_{q^{(t)}} \leq M$.
Since $F_{k}$ is $\ell_{k}$-smooth, it follows that
\begin{align*}
    F_{k}(q^{(\tau+1)}) \leq F_{k}(q^{(\tau)}) + \langle \texttt{grad} F_{k}(q^{(\tau)}), \texttt{Exp}_{q^{(\tau)}}^{-1}(q^{(\tau+1)})\rangle + \frac{\ell_{k}}{2}\langle\texttt{Exp}_{q^{(\tau)}}^{-1}(q^{(\tau+1)}), \texttt{Exp}_{q^{(\tau)}}^{-1}(q^{(\tau+1)})\rangle_{q^{(\tau)}}.
\end{align*}
Since $\texttt{Exp}_{q^{(t)}}^{-1}(q^{(t+1)}) = (\texttt{grad} \textbf{F}(q^{(t)}) + \xi^{(t)})\textbf{w}^{(t)}$, and $\ell_{k} \leq \max_{i\in [K]}\ell_{i}=L$, it follows that
\begin{align}
\begin{split}
    F_{k}(q^{(\tau+1)}) &\leq F_{k}(q^{(\tau)}) - \alpha \langle \texttt{grad} F_{k}(q^{(\tau)}), (\texttt{grad} \textbf{F}(q^{(\tau)}) + \xi^{(\tau)})\textbf{w}^{(\tau)}\rangle_{q^{(\tau)}} \\
    &+ \frac{L\alpha^{2}}{2}\langle (\texttt{grad} \textbf{F}(q^{(\tau)}) + \xi^{(\tau)})\textbf{w}^{(\tau)}, (\texttt{grad} \textbf{F}(q^{(t)}) + \xi^{(\tau)})\textbf{w}^{(\tau)}\rangle_{q^{(\tau)}}.
\end{split}
\end{align}
\noindent
For any $\textbf{w}\in \mathcal{W}$, we have that
\begin{align}
\label{appendix:inequality-1}
\begin{split}
    \textbf{F}(q^{(\tau+1)})\textbf{w} &\leq \textbf{F}(q^{(\tau)})\textbf{w} - \alpha \langle \texttt{grad} F(q^{(\tau)})\textbf{w}, (\texttt{grad} \textbf{F}(q^{(\tau)}) + \xi^{(\tau)})\textbf{w}^{(t)}\rangle_{q^{(\tau)}} \\
    &+ \frac{L \alpha^{2}}{2} \langle (\texttt{grad} F(q^{(\tau)}) + \xi^{(\tau)})\textbf{w}^{(\tau)}, (\texttt{grad} F(q^{(\tau)}) + \xi^{(\tau)})\textbf{w}^{(\tau)}\rangle_{q^{(\tau)}}\\
    &\leq \textbf{F}(q^{(\tau)})\textbf{w} - \alpha \langle \texttt{grad} \textbf{F}(q^{(\tau)})\textbf{w}^{(\tau)}, \texttt{grad} \textbf{F}(q^{(\tau)})\textbf{w}^{(\tau)}\rangle_{q^{(t)}} - \alpha \langle \texttt{grad} \textbf{F}(q^{(\tau)})\textbf{w}^{(\tau)},\xi^{(t)}\textbf{w}^{(\tau)}\rangle_{q^{(\tau)}}\\
    &+ \alpha \langle \texttt{grad} \textbf{F}(p^{(\tau)})(\textbf{w}^{(\tau)}-\textbf{w}), (\texttt{grad} \textbf{F}(q^{(\tau)}) + \xi^{(\tau)})\textbf{w}^{(\tau)}\rangle_{q^{(\tau)}}\\
    &+\frac{L}{2}\alpha^{2} \langle(\texttt{grad} \textbf{F}(p^{(\tau)}) + \xi^{(\tau)})\textbf{w}^{(\tau)}, (\texttt{grad} \textbf{F}(q^{(\tau)}) + \xi^{(\tau)})\textbf{w}^{(\tau)}\rangle_{q^{(\tau)}}.
\end{split}
\end{align}
By the basic inequality $(a+b)^{2} \leq 2(a^{2}+b^{2})$, we have that
\begin{align}
\label{appendix:inequality-2}
\begin{split}
    \langle(\texttt{grad} \textbf{F}(q^{(\tau)}) + \xi^{(\tau)})\textbf{w}^{(\tau)}, (\texttt{grad} \textbf{F}(q^{(\tau)}) + \xi^{(\tau)})\textbf{w}^{(\tau)}\rangle_{q^{(\tau)}} \leq 2 \langle \texttt{grad} \textbf{F}(q^{(\tau)})\textbf{w}^{(\tau)}, \texttt{grad} \textbf{F}(q^{(\tau)})\textbf{w}^{(\tau)}\rangle_{q^{(\tau)}} \\
    + 2 \langle\xi^{(t)}\textbf{w}^{(\tau)}, \xi^{(\tau)}\textbf{w}^{(\tau)}\rangle_{q^{(\tau)}},\\
    \langle \texttt{grad} \textbf{F}(q^{\tau})\textbf{w}^{(\tau)}, \xi^{(\tau)}\textbf{w}^{(\tau)} \rangle  \leq \frac{1}{2} \langle\texttt{grad} \textbf{F}(q^{(\tau)})\textbf{w}^{(\tau)}, \texttt{grad} \textbf{F}(q^{(\tau)})\textbf{w}^{(\tau)}\rangle_{q^{(\tau)}} 
    + \frac{1}{2}  \langle\xi^{(t)}\textbf{w}^{(\tau)}, \xi^{(\tau)}\textbf{w}^{(\tau)}\rangle_{q^{(\tau)}}.
\end{split}
\end{align}
Thus, combining (\ref{appendix:inequality-1}), (\ref{appendix:inequality-2}), we have that
\begin{align}
\label{appendix:inequality-3}
\begin{split}
    \textbf{F}(q^{(\tau+1)})\textbf{w} &\leq \textbf{F}(q^{(\tau)})\textbf{w} - \alpha (\frac{1}{2}- L\alpha) \langle \texttt{grad} \textbf{F}(q^{(\tau)})\textbf{w}^{(\tau)}, \texttt{grad} \textbf{F}(q^{(\tau)})\textbf{w}^{(\tau)}\rangle_{q^{(\tau)}}\\
    & + \alpha\left(L\alpha +\frac{1}{2}\right)\langle \xi^{(\tau)}\textbf{w}^{(\tau)}, \xi^{(\tau)}\textbf{w}^{(\tau)}\rangle_{q^{(\tau)}}\\
    &+ \alpha \langle \texttt{grad} \textbf{F}(q^{(\tau)})(\textbf{w}^{(\tau)}-\textbf{w}), (\texttt{grad} \textbf{F}(q^{(\tau)}) + \xi^{(\tau)})\textbf{w}^{(\tau)}\rangle_{q^{(\tau)}}.
\end{split}
\end{align}
\noindent
Based on the update of $\textbf{w}^{(\tau)}$, we have that
\begin{align}
\begin{split}
    \textbf{w}^{(\tau+1)}=\Pi_{\mathcal{W}}\left(\textbf{w}^{(\tau)} - \beta  (\texttt{grad} \textbf{F}(q^{(\tau)}) + \xi^{(\tau)})^\top(\texttt{grad} \textbf{F}(q^{(\tau)}) + \xi^{(\tau)})\textbf{w}^{(\tau)} \right),
\end{split}
\end{align}
where we have used the following identity for notational simplicity
\begin{equation*}
    (\texttt{grad} \textbf{F}(q^{(\tau)}) + \xi^{(\tau)})^\top(\texttt{grad} \textbf{F}(q^{(\tau)}) + \xi^{(\tau)})=\int_{\mathcal{X}}\Tilde{\textbf{V}}^{(\tau)}(\textbf{x})^\top\Tilde{\textbf{V}}^{(\tau)}(\textbf{x})q^{(\tau)}(\textbf{x})d\textbf{x},
\end{equation*}
and $\Tilde{\textbf{V}}^{(\tau)}(\textbf{x})=\left[\Tilde{\textbf{v}}^{(\tau)}_{1}(\textbf{x}),\Tilde{\textbf{v}}^{(\tau)}_{2}(\textbf{x}),...,\Tilde{\textbf{v}}^{(\tau)}_{K}(\textbf{x})\right]$ is the approximation of $\textbf{V}^{(\tau)}(\textbf{x})$. 

\noindent
Applying the non-expansiveness of the projection, it follows that
\begin{align}
\begin{split}
    \lVert \textbf{w}^{(\tau+1)} - \textbf{w}\rVert^{2}_{2} &=\lVert \Pi_{\mathcal{W}}\left(\textbf{w}^{(\tau)} - \beta (\texttt{grad} \textbf{F}(q^{(\tau)}) + \xi^{(\tau)})^\top(\texttt{grad} \textbf{F}(q^{(\tau)}) + \xi^{(\tau)})\textbf{w}^{(\tau)} \right) - \textbf{w}\rVert^{2}_{2}\\
    &\leq  \lVert \textbf{w}^{(\tau)} - \textbf{w}\rVert^{2}_{2} + \beta^{2}\lVert  (\texttt{grad} \textbf{F}(q^{(\tau)}) + \xi^{(\tau)})^\top(\texttt{grad} \textbf{F}(q^{(\tau)}) + \xi^{(\tau)})\textbf{w}^{(\tau)} \rVert^{2}_{2}\\
    & - 2\beta \langle (\texttt{grad} \textbf{F}(q^{(\tau)}) + \xi^{(\tau)})(\textbf{w}^{(\tau)}-\textbf{w}), (\texttt{grad} \textbf{F}(q^{(\tau)}) + \xi^{(\tau)})\textbf{w}^{(\tau)}\rangle_{q^{(t)}}\\
    &= \lVert \textbf{w}^{(\tau)} - \textbf{w}\rVert^{2}_{2} + \beta^{2}\lVert (\texttt{grad} \textbf{F}(q^{(\tau)}) + \xi^{(\tau)})^\top(\texttt{grad} \textbf{F}(q^{(\tau)}) + \xi^{(\tau)})\textbf{w}^{(\tau)} \rVert^{2}_{2}\\
    &- 2\beta \langle \texttt{grad} \textbf{F}(q^{(\tau)})(\textbf{w}^{(\tau)}-\textbf{w}), (\texttt{grad} \textbf{F}(q^{(\tau)}) + \xi^{(\tau)})\textbf{w}^{(\tau)}\rangle_{q^{(t)}}\\
    &- 2\beta  \langle \xi^{(\tau)}(\textbf{w}^{(\tau)}-\textbf{w}), \texttt{grad} \textbf{F}(q^{(\tau)})\textbf{w}^{(\tau)} + \xi^{(\tau)} \textbf{w}^{(\tau)}\rangle_{q^{(t)}}.
\end{split}
\end{align}
Thus it follows that 
\begin{align}
\label{appendix:inequality-4}
\begin{split}
    \langle \texttt{grad} \textbf{F}(q^{(\tau)})(\textbf{w}^{(\tau)}-\textbf{w}) &, (\texttt{grad} \textbf{F}(q^{(\tau)}) + \xi^{(\tau)})\textbf{w}^{(\tau)}\rangle_{q^{(t)}} \leq \frac{1}{2\beta}\left(\lVert \textbf{w}^{(\tau)} - \textbf{w}\rVert^{2}_{2}-\lVert \textbf{w}^{(\tau+1)} - \textbf{w}\rVert^{2}_{2} \right)\\
    &- \langle \xi^{(t)} (\textbf{w}^{(\tau)}-\textbf{w}),  \texttt{grad} \textbf{F}(q^{(\tau)})w^{(\tau)} + \xi^{(\tau)}\textbf{w}^{(\tau)}\rangle_{q^{(t)}} \\
    &+ \frac{\beta}{2}\lVert (\texttt{grad} \textbf{F}(q^{(\tau)}) + \xi^{(\tau)})^\top(\texttt{grad} \textbf{F}(q^{(\tau)}) + \xi^{(\tau)})\textbf{w}^{(\tau)}\rVert^{2}_{q^{(t)}}\\
    & \leq  \frac{1}{2\beta}\left(\lVert \textbf{w}^{(\tau)} - \textbf{w}\rVert^{2}_{2}-\lVert \textbf{w}^{(\tau+1)} - \textbf{w}\rVert^{2} _{2}\right)  \\
    &- \langle \xi^{(t)}(\textbf{w}^{(\tau)}-\textbf{w}), \texttt{grad} \textbf{F}(q^{(\tau)})\textbf{w}^{(\tau)} + \delta^{(\tau)} \textbf{w}^{(\tau)}\rangle_{q^{(t)}}
    + 2\beta K M^{4} + 4\beta K M^{2}\sigma^{2} + 2\beta K \sigma^{4},
\end{split}
\end{align}
where the last inequality holds due to Assumption 2. Then plugging (\ref{appendix:inequality-4}) into (\ref{appendix:inequality-3}), we can show that
\begin{align}
\label{appendix:inequality-5}
\begin{split}
    \textbf{F}(q^{(\tau+1)})\textbf{w} - \textbf{F}(q^{(\tau)})\textbf{w} &\leq  - \alpha \left(\frac{1}{2}- L\alpha\right)  \langle \texttt{grad} \textbf{F}(q^{(\tau)})\textbf{w}^{(\tau)}, \texttt{grad} \textbf{F}(q^{(\tau)})\textbf{w}^{(\tau)} \rangle_{q^{(t)}} +  \alpha (\frac{1}{2}+L\alpha) \sigma^{2}\\
    & + \frac{\alpha}{2\beta}\left(\lVert \textbf{w}^{(\tau)} - \textbf{w}\rVert^{2}_{2}-\lVert \textbf{w}^{(\tau+1)} - \textbf{w}\rVert^{2}_{2} \right)\\
    & +\alpha  \sigma (M + \sigma) + 2\alpha\beta K M^{4} + 4\alpha\beta K M^{2}\sigma^{2} + 2\alpha\beta K\sigma^{4}.
\end{split}
\end{align}
Taking sum of (\ref{appendix:inequality-5}) from $t=0$ to $\tau$, for any $\textbf{w}\in \mathcal{W}$, we have that
\begin{align}
\label{appendix:inequality-6}
\begin{split}
    \textbf{F}(q^{(\tau+1)})\textbf{w} - \textbf{F}(q^{(0)})\textbf{w} &\leq 
    -\frac{\alpha}{4} \sum_{t=0}^{\tau}\langle \texttt{grad} \textbf{F}(q^{(t)})\textbf{w}^{(t)}, \texttt{grad} \textbf{F}(q^{(t)})\textbf{w}^{(t)}\rangle_{q^{(t)}}  + \frac{3}{4}\alpha T \sigma^{2}  \\
    & + \frac{\alpha}{2\beta}\left(\lVert \textbf{w}^{(0)}-\textbf{w} \rVert^{2}_{2}-\lVert \textbf{w}^{(\tau+1)}-\textbf{w} \rVert^{2}_{2} \right)\\
    & + 2\alpha \beta K M^{4}T + 4\alpha\beta K M^{2}\sigma^{2}T + 2\alpha\beta K \sigma^{4} T\\
    & \leq \frac{3}{4}\alpha T \sigma^{2} + \frac{\alpha}{\beta} + 2\alpha\beta T (K M^{4} + 2 K M^{2}\sigma^{2} + K \sigma^{4})
\end{split}
\end{align}
where the inequality is due to $\tau < T$, $\alpha L \leq 1/4$ and Assumption 2. Thus, we have that
\begin{align}
    \label{ineq:induction}
    \textbf{F}(q^{(\tau+1)})\textbf{w} - \textbf{F}^{*}\textbf{w} \leq  \underbrace{\textbf{F}(q^{(0)})\textbf{w} - \textbf{F}^{*}\textbf{w}}_{\leq \Delta} + \underbrace{\frac{3}{4}\alpha T \sigma^{2}}_{\leq b_{1}} + \underbrace{\frac{\alpha}{\beta}}_{\leq b_{2}} + \underbrace{2\alpha\beta T (K M^{4} + 2 K M^{2}\sigma^{2} + K \sigma^{4})}_{\leq b_{3}}\leq C.
\end{align}
Now we finish the induction step and can show that $F_{k}(q^{(\tau+1)})-F_{k}^{*} \leq C$ for all $k\in [K]$. Furthermore, according to (\ref{appendix:inequality-6}), for 
$\tau = T-1$, we have that
\begin{align}
\label{ineq:last}
\begin{split}
     \frac{1}{T}\sum_{t=0}^{T-1}\langle \texttt{grad} \textbf{F}(q^{(t)})\textbf{w}^{(t)}, \texttt{grad} \textbf{F}(q^{(t)})\textbf{w}^{(t)}\rangle_{q^{(t)}}  &\leq  \frac{4(\textbf{F}(q^{(0)})\textbf{w}-\textbf{F}^{*}\textbf{w})}{\alpha T} + \frac{4}{\beta T} + 8\beta (K M^{4} + 2 K M^{2}\sigma^{2} + K \sigma^{4}) + 3 \sigma^{2}\\
     &\leq  \epsilon^{2} + 3 \sigma^{2},
\end{split}
\end{align}
which completes the proof.
\end{proof}

\noindent
\textbf{Remarks}. Although there is a circular dependency among the parameters $\beta$, $\alpha$, and $T$, the required conditions in Theorem~\ref{theorem:convergence_detailed} can still be satisfied simultaneously. Specifically, if we choose 
\[
\alpha = \mathcal{O}(\epsilon^2), \quad \beta = \mathcal{O}(\epsilon^2), \quad \alpha T = \Theta(\epsilon^{-2}), \quad \beta T = \Theta(\epsilon^{-2}), \quad \text{and} \quad T = \Theta(\epsilon^{-4}),
\]
then all the necessary assumptions hold. To make this construction concrete, consider the following example.

Let us set 
\[
\beta = a_1 \epsilon^2, \quad \alpha = a_2 \epsilon^2, \quad \text{and} \quad a_4 \epsilon^{-4} \leq T \leq a_3 \epsilon^{-4},
\]
where the constants $a_1, a_2, a_3, a_4 > 0$ are chosen such that inequality~\eqref{ineq:induction} is satisfied. Specifically, we require
\begin{align*}
    0 &< a_1 \leq \frac{1}{24(K M^4 + 2 K M^2 \sigma^2 + K\sigma^4)}, \\
    0 &< a_2 \leq a_1 b_2, \\
    0 &< a_3 \leq \min\left( \frac{4 b_1 \epsilon^4}{3\sigma^2}, \frac{b_3 \epsilon^2}{2 a_1(K M^4 + 2 K M^2 \sigma^2 + K\sigma^4)} \right).
\end{align*}
With this choice of constants, it is straightforward to verify that inequality~\eqref{ineq:induction} holds.

Next, we choose $a_4$ such that $0 < a_4 \leq a_3$, ensuring that $T$ satisfies
\[
a_4 \epsilon^{-4} \leq T \leq a_3 \epsilon^{-4}.
\]
It follows that
\begin{align*}
    T\beta & \geq a_1 a_4 \epsilon^{-2}, \\
    T\alpha & \geq a_2 a_4 \epsilon^{-2}.
\end{align*}
We now analyze the upper bound in inequality~\eqref{ineq:last}:
\begin{align}
\begin{split}
     \frac{1}{T} \sum_{t=0}^{T-1} \left\langle \texttt{grad} \mathbf{F}(q^{(t)}) \mathbf{w}^{(t)}, \texttt{grad} \mathbf{F}(q^{(t)}) \mathbf{w}^{(t)} \right\rangle_{q^{(t)}} 
     &\leq \frac{4\Delta}{a_2 a_4} \epsilon^2 + \frac{4}{a_1 a_4} \epsilon^2 \\
     &\quad + 8(K M^4 + 2 K M^2 \sigma^2 + K\sigma^4)a_1\epsilon^2 + 3\sigma^2 \\
     &= \mathcal{O}(\epsilon^2) + 3\sigma^2.
\end{split}
\end{align}

Therefore, we have constructed a sequence of parameter choices $(a_1, a_2, a_3, a_4)$ such that all theoretical conditions are met. Moreover, the norm of the convex combination of Wasserstein gradients is bounded by a term of order $\mathcal{O}(\epsilon^2)$—reflecting the convergence rate of MWGraD with exact velocity fields—plus a $3\sigma^2$ term, which accounts for the error introduced by approximate gradient computations.

As established in Theorem \ref{theorem:convergence_detailed}, which analyzes the convergence of MWGraD, the squared norm of the convex combination of Wasserstein gradients is bounded by two terms: one of order $\epsilon^2$, and the other proportional to the approximation error $\sigma^2$. This result suggests that, in order to avoid convergence to a non–Pareto stationary point, it is important to minimize the approximation error $\sigma^2$ as much as possible. For instance, in the MWGraD-NN variant, where Wasserstein gradients are approximated using neural networks, we can reduce $\sigma^2$ by increasing the capacity of the networks—e.g., by adding more layers or neurons. However, this improvement in accuracy comes at the cost of increased computational complexity. It is also worth noting that Theorem \ref{theorem:convergence_detailed} does not rely on any geodesic convexity assumptions. Therefore, the convergence guarantees extend naturally to non-convex settings.

\section{Additional Experimental Results on synthetic datasets}
\label{exp:settings}
We consider $F_{k}(q)=D(q,\pi_{k})$, where $D$ is the KL divergence or JS divergence, and $\pi_{k}$ is characterized by a set of samples. We use the same target distribution used in the case of energy functional but we generate a set of samples to represent the target distribution instead of its energy functional. In our experiments, we generate 30 samples for each target distribution, so we have 120 samples for all objectives. For this case, MOO-SVGD, MWGraD-SVGD and MWGraD-Blob cannot be used. Thus we need to use the variational form of the KL and JS divergences. For KL divergence, we use its variational form (\ref{eqn:varformKL}). For JS divergence, as shown in \citep{nguyen2023mirror}, the variational form of the JS divergence is as follows:
\begin{align}
\label{eqn:varformJS}
    JS(q, \pi_{k}) = \sup_{h_{k}\in\mathcal{H}^{c}}{\left\{ \mathbb{E}_{\textbf{x}\sim q}\left[ h_{k}(\textbf{x}) \right]   -JS^{*}(h_{k})\right\}}
\end{align}
where $JS^{*}(h_{k})=-\frac{1}{2}\mathbb{E}_{\textbf{x}\sim\pi}\left[ \log\left( 1-2e^{2h_{k}(\textbf{x})}\right) \right]$, and $\mathcal{H}^{c}$ is the space of function $h$ that satisfies: $h(\textbf{x})< 1/2\log(1/2)$ for all $\textbf{x}\in \mathcal{X}$. We introduce the following change of variable: $h_{k}^\prime(\textbf{x})=1-2e^{2h_{k}(\textbf{x})}$. It is easy to verify that $0<h_{k}^\prime(\textbf{x})<1$ for all $\textbf{x}\in \mathcal{X}$. Then, the variational form of JS divergence can be rewritten as:
\begin{align*}
     \sup_{h_{k}^\prime\in\mathcal{H}^\prime}{\left\{ \mathbb{E}_{\textbf{x}\sim q}\left[ \log\left( 1-h_{k}^\prime(\textbf{x}) \right) \right]  + \mathbb{E}_{\textbf{y}\sim \pi}\left[ \log(h_{k}^\prime(\textbf{x})) \right]\right\}}
\end{align*}
where $\mathcal{H}^\prime$ is the space of functions whose outputs are in between 0 and 1. As we have access to samples drawn from $q$ and $\pi$, we can estimate $h_{k}^\prime$ and then estimate $h_{k}$ using: $h_{k}(\textbf{x})=\frac{1}{2}\log\left( \frac{1-h_{k}^\prime(\textbf{x})}{2} \right)$ for all $\textbf{x}\in\mathcal{X}$. For the function $h_{k}^\prime$, we also use a neural network with two layers, each of which has 50 neurons, to parameterize it. To guarantee the outputs of $h_{k}^\prime$ to be in $(0,1)$, we use the sigmoid activation function in the last layer. We set the same step sizes as in the previous experiments, i.e., $\alpha=0.0001, \beta=0.001$. We run 2000 updates of particles.

Figure \ref{fig:toy-2} shows the particles updated by MWGraD at selected iterations, for two cases of KL divergence and JS divergence. We observe that MWGraD tend to cover the common high-density regions with particles, which is consistent with our observation in cases where $F_{k}(q)$ is represented by an energy functional.

\begin{figure*}[t]
\centerline{\includegraphics[width=1.0\textwidth]{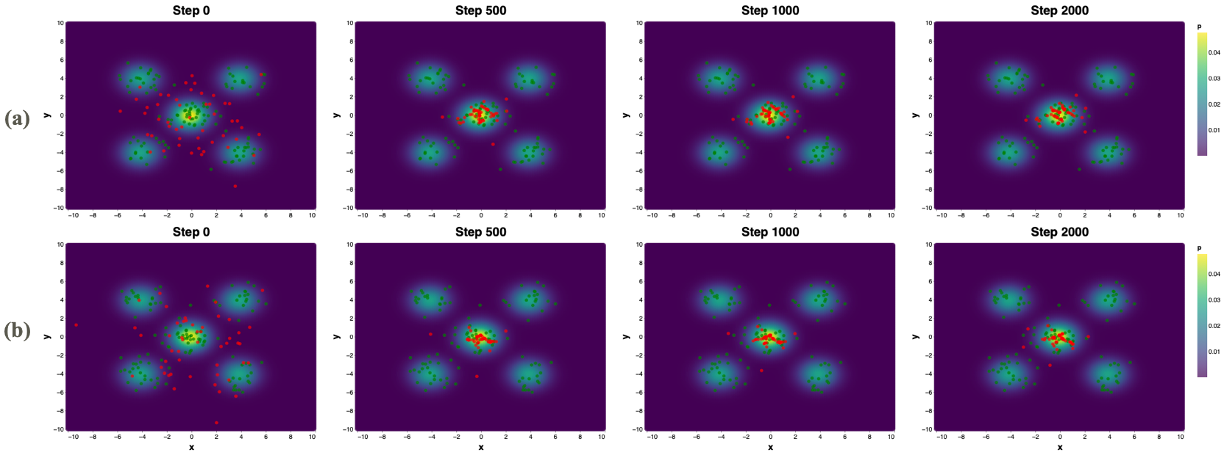}}
	\caption{The MODO problem on synthetic dataset. There are four objectives, each of which is represented by 30 particles (green points) randomly drawn from a mixture of two Gaussian distributions. The dissimilarity function $D$ is defined as the (a) KL divergence or (b) JS divergence. The objectives have a common high-density of particles. Initially 50 particles (red points) are sampled from the standard distribution to represent $q$, and then updated using MWGraD-NN. In both cases of divergences, MWGraD-NN drives the particles to the joint high density region around the origin. Note that, in this toy experiments, MWGraD-SVGD, MWGraD-Blob, MOO-SVGD cannot be used as the objective functions are not the form of energy functionals.}
	\label{fig:toy-2}
\end{figure*}

\section{Additional Experimental Results on Real-world datasets}
\label{exp:realworlddatasets}
In this section, we first examine the time complexity of MWGraD. We conduct experiments on three datasets—Multi-MNIST, Multi-Fashion, and Multi-Fashion+MNIST—to evaluate the runtime (in seconds) per training epoch for various MWGraD variants and baselines, including MOO-SVGD and MT-SGD. Table \ref{tab:runtime} presents the average runtimes of each method, computed over five runs.

We observe that MWGraD-SVGD and MWGraD-Blob have comparable runtimes to MT-SGD because they involve approximating Wasserstein gradients using kernel matrices, which is computationally efficient. In contrast, MWGraD-NN has a slower runtime due to the need to train a neural network to approximate the Wasserstein gradients of the objective functions at each iteration, which can be more time-consuming. Additionally, MOO-SVGD is slower than our methods, as it requires solving separate quadratic problems for each particle (each network). Specifically, MOO-SVGD is 9 seconds slower than MWGraD-SVGD and MWGraD-Blob, and 1 second slower than MWGraD-NN.

Second, we conduct experiments to verify the importance of key modules of MWGraD as follows. We compare the effectiveness of approximation methods, including MWGraD-SVGD, MWGraD-Blob and MWGraD-NN, for the Wasserstein gradients of objective functionals. 

Table \ref{tab:approximationmethods} (extracted from Table \ref{tab:results}) shows the ensemble accuracy numbers of compared approximation methods over three independent runs with different initializations. We observe that MWGraD-Blob achieves the best performances of 96.71\%, 97.64\% for task 1 on Multi-Fashion+MNIST and Multi-MNIST, respectively, while MWGraD-NN achieves the best performance of 97.7\% and 87.2\% on Multi-MNIST and Multi-Fashion. For task 2, MWGraD-Bob again achieves the best performance across all datasets, with average accuracies of 92.5\%, 96.7\% and 87.2\%, respectively.

Furthermore, we also explore the importance of the weight update step, and verify how the performance would change if the uniform weight were used to combine multiple Wasserstein gradients.
Tables \ref{tab:uniform-multi-fashion-mnist}, \ref{tab:uniform-multi-mnist} and \ref{tab:uniform-multi-fashion}  shown below compare the accuracies of task 1, task 2 and the average for each variant of MWGraD with and without the weight update step (we use the suffix "uniform" to indicate that the uniform weights are used). We observe that when we remove the weight update step, the performances of variants decrease. For example, for experiments on Multi-Fashion, the performance of MWGraD-Blob decreases from 87\% to 84.9\% (roughly 2\%), that of MWGraD-SVGD decreases from 84.7\% to 83.7\% (1\%) and that of MWGraD-NN decreases from 86.3\% to 85.5\% (0.7\%). These experiments demonstrate the importance of the weight update step in our proposed methods.

\begin{table}[]
    \centering
    \caption{Average runtime of compared methods on real-world datasets.}
        \begin{tabular}{l c c c}
        \hline
        \textbf{Runtime} & Multi-MNIST & Multi-Fashion & Multi-Fashion+MNIST\\
        \hline
         MOO-SVGD & 54.4 $\pm$ 0.3 & 55.1 $\pm$ 0.6 & 58.2 $\pm$ 0.3 \\
         MT-SGD & 48.1 $\pm$ 0.5 & 46.3 $\pm$ 0.3 & 44.1 $\pm$ 0.7 \\
         MWGraD-SVGD & 47.2 $\pm$ 0.9 & 45.2 $\pm$ 0.8 & 45.5 $\pm$ 0.6 \\
         MWGraD-Blob & 45.5 $\pm$ 0.6 & 46.2 $\pm$ 0.9 & 45.1 $\pm$ 0.5 \\
         MWGraD-NN & 53.3 $\pm$ 0.6 & 54.3 $\pm$ 0.5 & 56.8 $\pm$ 0.8 \\
        \hline
    \end{tabular}
    \label{tab:runtime}
\end{table}

\begin{table}[]
    \centering
    \caption{Average accuracies of approximation methods, including MWGraD-SVGD, MWGraD-Blob and MWGraD-NN, for the Wasserstein gradients of objective functionals on real-world datasets. 
.}
        \begin{tabular}{l c c c c}
        \hline
        \textbf{Datasets} & \textbf{Tasks} & MWGraD-SVGD & MWGraD-Blob & MWGraD-NN\\
        \hline
         Multi-Fashion+MNIST & 1 & 95.7$\pm$0.4 & \textbf{96.7}$\pm$0.5 & 95.9$\pm$0.4 \\
                             & 2 & 88.9$\pm$0.6 & \textbf{92.5}$\pm$0.4 & 88.2$\pm$0.3 \\
            Multi-MNIST & 1 & 94.5$\pm$0.4 & \textbf{97.6}$\pm$0.2 & \textbf{97.7}$\pm$0.5 \\
                        & 2 & 93.2$\pm$0.6 & \textbf{96.7}$\pm$0.5 & 95.5$\pm$0.4 \\
            Multi-Fashion & 1 & 85.1$\pm$0.7 & 86.8$\pm$0.3 & \textbf{87.2}$\pm$0.4 \\
                          & 2 & 84.3$\pm$0.4 & \textbf{87.2}$\pm$0.5 & 85.3$\pm$0.6 \\
        \hline
    \end{tabular}
    \label{tab:approximationmethods}
\end{table}

\begin{table}[]
    \centering
    \caption{The accuracies of task 1, task 2 and the average for each variant of MWGraD with and without the weight update step (suffix 'uniform' indicates that the uniform weights are used). Dataset: \textbf{Multi-Fashion+MNIST}.}
        \begin{tabular}{l l l l l l l}
        \hline
        \textbf{Task} & MWGraD-SVGD & MWGraD-SVGD  & MWGraD-Blob & MWGraD-Blob & MWGraD-NN & MWGraD-NN\\
        & &-uniform& & -uniform & & -uniform\\
        \hline
          1 & 95.7$\pm$0.4 & 94.1$\pm$0.9 & 96.7$\pm$0.5 & 94.2$\pm$0.6 & 95.9$\pm$0.4 & 95.5$\pm$0.3\\
         2 & 88.9$\pm$0.6 & 87.2$\pm$0.8 & 92.5$\pm$0.4 & 92.8$\pm$0.8 & 88.2$\pm$0.3 & 87.9$\pm$0.7\\
         \textbf{Avg} & 92.3 & 90.7 & 94.6 & 93.5 & 92.1 & 91.7\\
        \hline
    \end{tabular}
    \label{tab:uniform-multi-fashion-mnist}
\end{table}

\begin{table}[]
    \centering
    \caption{The accuracies of task 1, task 2 and the average for each variant of MWGraD with and without the weight update step. Dataset: \textbf{Multi-MNIST}.}
        \begin{tabular}{l l l l l l l}
        \hline
        \textbf{Task} & MWGraD-SVGD & MWGraD-SVGD  & MWGraD-Blob & MWGraD-Blob & MWGraD-NN & MWGraD-NN\\
        & &-uniform& & -uniform & & -uniform\\
        \hline
           1 & 94.5$\pm$0.4 & 94.6$\pm$0.6 & 97.6$\pm$0.2 &95.9$\pm$0.5 & 97.7$\pm$0.5 & 97.1$\pm$0.6\\
            2 & 93.2$\pm$0.6 & 93.5$\pm$0.4 & 96.7$\pm$0.7 & 97.2$\pm$0.6 & 95.5$\pm$0.4 & 93.9$\pm$0.5\\
 \textbf{Avg} & 93.9 & 94.1 & 97.2 & 96.6 & 96.6 & 95.5 \\
        \hline
    \end{tabular}
    \label{tab:uniform-multi-mnist}
\end{table}

\begin{table}[]
    \centering
    \caption{The accuracies of task 1, task 2 and the average for each variant of MWGraD with and without the weight update step. Dataset: \textbf{Multi-Fashion}.}
        \begin{tabular}{l l l l l l l}
        \hline
        \textbf{Task} & MWGraD-SVGD & MWGraD-SVGD  & MWGraD-Blob & MWGraD-Blob & MWGraD-NN & MWGraD-NN\\
        & &-uniform& & -uniform & & -uniform\\
        \hline
        1 & 85.1$\pm$0.7 & 85.2$\pm$0.4 & 86.8$\pm$0.4 & 84.1$\pm$0.7 & 87.2$\pm$0.4 & 85.9$\pm$0.3 \\
        2 & 84.3$\pm$0.4 & 82.1$\pm$0.6 & 87.2$\pm$0.5 & 85.8$\pm$0.6 & 85.3$\pm$0.6 & 85.1$\pm$0.8\\
        \textbf{Avg} & 84.7 & 83.7 & 87.0 & 84.9 & 86.3 & 85.5 \\
        \hline
    \end{tabular}
    \label{tab:uniform-multi-fashion}
\end{table}

\end{document}